\newtheorem{example}{Example}
\newtheorem{theorem}{Theorem}
\newtheorem{definition}{\textbf{Definition}}
\newtheorem{proposition}{\textbf{Proposition}}
\DeclareMathOperator*{\argmax}{arg\,max}
\def\safe{\mbox{\sl Safe}}
\def\opert{\mbox{\sl Pert}}
\def\pert{\overline{\opert}}
\def\uco{\mbox{\sl uco}}
\def\res{\mathbb{C}}
\newcommand{\tuple}[1]{\langle #1 \rangle}
\def\id{\mbox{\sl id}}
\def\defi{\mbox{\raisebox{0ex}[1ex][1ex]{$\stackrel{\mbox{\tt\tiny def}}{\; =\;}$}}}
\newcommand{\sset}[2]{\left\{~#1  \left |
	\begin{array}{l}#2\end{array}
	\right.     \right\}}
\def\comp{\mathrel{\hbox{\footnotesize${}\!{\circ}\!{}$\normalsize}}}
\definecolor{color0}{RGB}{240, 78, 64} 
\definecolor{color4}{RGB}{60, 220, 125} 
\definecolor{color6}{RGB}{120, 100, 200} 
\definecolor{color7}{RGB}{107, 100, 200} 
\definecolor{nnedgecolor}{RGB}{90,90,90}
\tikzstyle{every pin edge}=[<-,shorten <=1pt]
\tikzstyle{every path}=[draw=color7!50]
\tikzstyle{neuron}=[circle,fill=black!25,minimum size=17pt,inner sep=0pt]
\tikzstyle{input neuron}=[neuron, fill=color4]
\tikzstyle{output neuron}=[neuron, fill=color0]
\tikzstyle{hidden neuron}=[neuron, fill=color6]
\tikzstyle{annot} = [text width=4em, text centered]
\tikzstyle{nnedge} = [-{stealth},shorten >=0.1cm, shorten <=0.05cm,line 
\tikzstyle{nnedge_t} = [-{dashed},shorten >=0.1cm, shorten <=0.05cm,line 
\def\wIm{\overline{\Im}}
\title{Advancing Neural Network Verification through \\ Hierarchical Safety Abstract Interpretation}
\author{
  Luca Marzari, Isabella Mastroeni and Alessandro Farinelli
  \\
  Department of Computer Science, University of Verona, Verona, Italy.\\
  Contact authors: \textit{luca.marzari@univr.it}}
\begin{document}
\maketitle

\begin{abstract}
Traditional methods for formal verification (FV) of deep neural networks (DNNs) are constrained by a binary encoding of safety properties, where a model is classified as either safe or unsafe (robust or not robust). This binary encoding fails to capture the nuanced safety levels within a model, often resulting in either overly restrictive or too permissive requirements. In this paper, we introduce a novel problem formulation called \textsc{Abstract DNN-Verification}, which verifies a hierarchical structure of unsafe outputs, providing a more granular analysis of the safety aspect for a given DNN. Crucially, by leveraging abstract interpretation and reasoning about output reachable sets, our approach enables assessing multiple safety levels during the FV process, requiring the same (in the worst case) or even potentially less computational effort than the traditional binary verification approach. Specifically, we demonstrate how this formulation allows rank adversarial inputs according to their \textit{abstract safety} level violation, offering a more detailed evaluation of the model's safety and robustness. Our contributions include a theoretical exploration of the relationship between our novel abstract safety formulation and existing approaches that employ abstract interpretation for robustness verification, complexity analysis of the novel problem introduced, and an empirical evaluation considering both a complex deep reinforcement learning task (based on Habitat 3.0) and standard DNN-Verification benchmarks.
\end{abstract}


\section{Introduction}

Deep neural networks (DNNs) have significantly advanced the field of artificial intelligence, powering applications from image recognition \cite{image} and robotics medical applications \cite{colon} to autonomous navigation \cite{navigation,curriculum}. 
The success of these approximation functions across various domains has also prompted their use in safety-critical areas where expensive hardware and human safety can be involved. However, the vulnerability of these systems to the so-called adversarial inputs \cite{adversarial,tacas}---imperceptible modifications to the original input data that can lead to wrong and potentially catastrophic decisions--- has raised concerns about their reliability and safety.
To address such an issue, formal verification (FV) of neural networks \cite{Reluplex,singh2019abstract,LiuSurvey} has recently emerged as a promising solution. This process involves defining safety or robustness specifications that can be formally verified to ensure the network behaves as intended, even in the presence of these small input perturbations \cite{LiuSurvey}. In detail, a property to be verified is defined as a precondition on the input space and a postcondition on the output. 
In this context, the use of abstract interpretation \cite{CC77} and, in particular, abstract domain provides a key resource to solve efficiently the problem \cite{Ai2,singh2019abstract}. In fact, the precondition of a safety specification is typically encoded using an abstraction of the input domain, exploiting different geometries, such as convex polytopes, to capture all the possible configurations of an unsafe scenario we aim to verify. The postcondition represents the desired (or undesired) outcome we expect from the neural network for that specific precondition. For instance, considering a scenario where a neural network guides an autonomous system, the precondition of a safety property can encode a specific unsafe situation where the autonomous agent is near an obstacle, and the postcondition is the action that should be avoided in order not to collide. Nonetheless, in many real-world scenarios, reducing the answer of formal verification to a binary classification of \textit{``safe"} or \textit{``unsafe"} nature leads to the risks of losing the full expressive potential of these FV tools. In particular, when encoding a safety property, a natural question arises: \textit{``Are some actions only marginally unsafe and thus potentially acceptable, while others pose severe risks?"} Despite significant advancements of DNN verification tools over the years \cite{LiuSurvey,Reluplex,crown,acrown,bcrown,wei2024modelverification}, the binary encoding of properties ``safe" or ``unsafe" (``robust" or ``not robust") often fails to capture the full range of potential system behaviors, resulting in the verification of properties that are either overly restrictive or excessively permissive. We argue that these limitations stem from the current focus on using abstract interpretation (in particular over-approximations) to derive concrete output results rather than employing abstract reasoning to directly analyze the structure of the output set.
\begin{figure*}[t]
    \begin{center}
        \includegraphics[width=\linewidth]{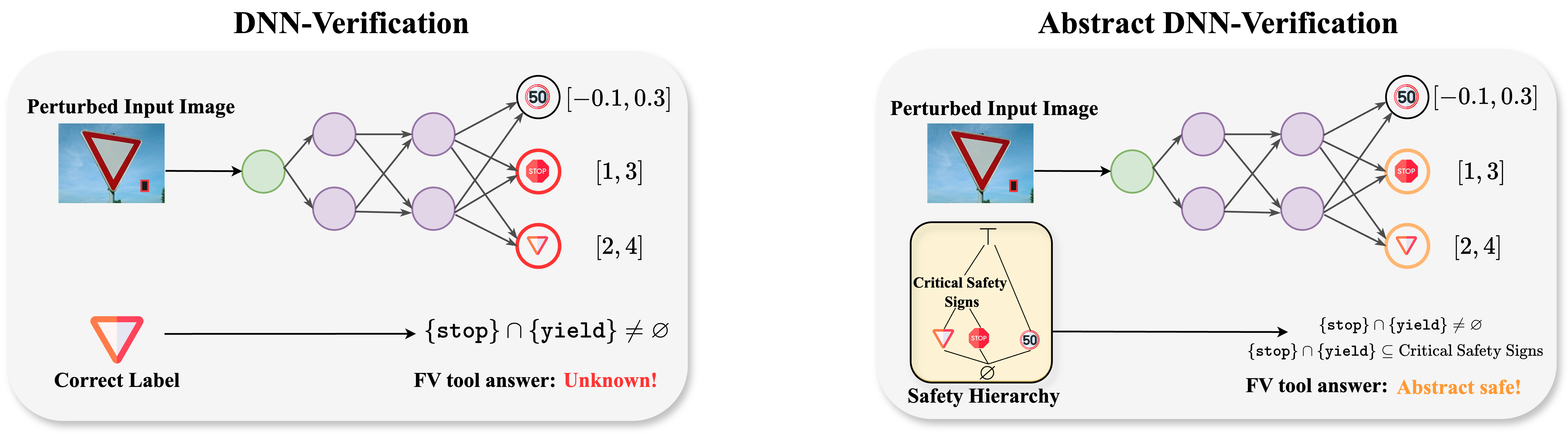}
    \end{center}
    \caption{Overview of the proposed \textsc{Abstract DNN-Verification} in this paper. On the left, given the intersection of the reachable sets, the concrete classes are not able to provide an answer. Conversely if we use the first level of abstraction (right part of the image) even if the reachable sets are overlapped, i.e., the intersection is nonempty, it is fully contained in the set of \textit{``safety-critical signs"} output abstraction, allowing to provide the \textit{abstract safe} answer.}
    \label{fig:overview}
    \vspace{-2mm}
\end{figure*}

To address this limitation, we introduce \textsc{Abstract DNN-Verification}, a novel problem formalization that enables reasoning directly within a hierarchical structure of tolerable unsafe outputs. This approach allows verification of a predefined hierarchy among outputs, providing a deeper understanding of the safety levels of a DNN and potentially helping the verification in ambiguous cases—such as when the over-approximation induced by the propagation of the abstract input domain produces overlapping output results. Specifically, in these situations, incomplete verifiers, which prioritize scalability over completeness, in the worst case, could return an \textit{``unknown"} result, leaving the safety question unresolved. Conversely, sound and complete verifiers always provide an answer but require extensive input refinement \cite{reluval} or splitting of the network's unstable regions \cite{bab,bcrown} to resolve the ambiguities, leading to significant computational overhead, especially for large-scale models. Although our formulation preserves the NP-hard complexity of the original problem \cite{Reluplex}, it introduces a novel capability: in cases with overlapping or ambiguous outputs, it allows analysts to classify the DNN as either \textit{safe} or \textit{abstract safe}. The latter refers to situations where the model's output does not match the exact target class but still falls within a predefined safe region of the output hierarchy specified for the verification process. Furthermore, it facilitates a deeper understanding of the regions or features that greatly impact the safety hierarchy, thus offering more actionable insights into the model’s behavior.

Let us focus on the example in Fig.\;\ref{fig:overview}, and consider a scenario where a mobile robot relies on a neural network to interpret road signs from image inputs to make navigation decisions. For instance, suppose we test the system's robustness to local patch perturbations as depicted in the perturbed input image of Fig.\;\ref{fig:overview}. In this context, the standard FV process passes the perturbed image through the DNN and evaluates the verification outcome on over-approximated reachable output sets by selecting the greater reachable set.\footnote{Typically, the node with greater value is the one selected by the DNN. We will provide further details in the next sections.} However, the over-approximation error often leads to overlapping reachable sets, as shown in Fig.\;\ref{fig:overview}(left), where $\{\texttt{stop}\} \cap \{\texttt{yield}\} = [1,3] \cap[2, 4]\neq \varnothing$, preventing a precise result without the necessity of further computational demanding operations.
In contrast, with \textsc{Abstract DNN-Verification}, for a given input perturbation, we can compute a set of potential DNN answers and then check if, for a given safety hierarchy, the set respects one or multiple levels of the latter, thus providing the possibility to the analyst to better understand the actual DNN's safety level. In the example, the defined hierarchy (yellow box in Fig \ref{fig:overview} (right)) groups both the $\{\texttt{stop}\}$ and $\{\texttt{yield}\}$ signs into the broader category of \textit{``Critical Safety Signs"}. This hierarchy abstraction acknowledges that while a misclassification between these two signs would technically be an error, it represents a form of error that is still acceptable. Specifically, if the DNN misclassifies a ``yield`` as a ``stop`` sign, the hierarchy abstracts the decision on \textit{``Critical Safety Signs"} which is strictly below the $\top$ (representing the fact that the DNN does not classify the sign as a ``normal" sign, e.g., 50 mph), and thus the system's safety is abstractly preserved.
We emphasize that the definition of the safety property, and the resulting hierarchy, is inherently shaped by the analyst's perspective and contextual judgment. This subjectivity, however, is widely acknowledged and accepted in the neural network verification literature, where it is typically grounded in domain knowledge and tailored to the specific operational context. Hence, our hierarchical approach can be designed at different levels of granularity to enrich the information available when violations occur, which is particularly valuable given the practical challenges of achieving complete model robustness in complex, real-world environments. On the example of misclassifying a ``stop” sign as a ``yield” sign, while this is indeed an error, our framework still offers important safety insight: the prediction remains within a class of potentially dangerous signs, rather than being misclassified as a “safe” sign. This information can be used to trigger post-verification fallback strategies, such as slowing down and stopping (even if it is a ``yield" sign). In contrast, binary verification would mark this case as ``not robust,” providing no further guidance to the analyst. Throughout the paper, we will show that even in safety-critical situations, our method delivers richer and more actionable feedback than binary verification, enabling system designers to understand not only that a failure occurred, but also why it occurred and how to respond appropriately.

Specifically, in the following sections, we guide the reader to understand how our new formulation allows us to answer multiple critical questions during the FV process, such as: \textit{``Which level of unsafety does my model present?"}, \textit{``Does the model output an action that is only potentially safe?"}, and \textit{``How much do perturbations of a particular input feature impact on the overall safety?"} \\
Finally, to further confirm the impact of our new formulation, we apply \textsc{Abstract DNN-Verification} on realistic deep reinforcement learning human-robot cooperation tasks, namely Habitat \cite{habitat2,habitat3} as well as in two standard benchmarks of the international verification of neural networks competition (VNN-COMP) \cite{VNN-comp2023}, such as CIFAR10 \cite{cifar10}. Results in the first experiment show that even relatively simple hierarchies enable in our novel problem formulation allow the ranking of different adversarial attacks based on the violation of one or more levels of the hierarchy, a capability absent in standard approaches that fail to differentiate between varying attack impacts. 
In the second one, for a given safety hierarchy, our approach provides valuable insights into the robustness level of deep neural networks, uncovering scenarios where instances traditionally classified as ``not robust" may exhibit a specific form of robustness when viewed through the lens of tolerable (based on the safety hierarchy) misclassifications. This nuanced understanding of safety and robustness offers a more realistic and practical evaluation of the model's performance in safety-critical applications.

\section{Preliminaries}\label{sec:preliminaries}

This section provides the reader with all the basic knowledge and notation to easily follow the paper.

\paragraph{Deep Neural Network.} Let $f: \mathbb{R}^m \to \mathbb{R}^n$ be a neural network that maps inputs in $\mathbb{R}^m$ to outputs in  $\mathbb{R}^n$, with $m,n \in \mathbb{N}$, respectively. For a generic input $\mathbf{x} = \langle x_1, \dots, x_m \rangle \in \mathbb{R}^m$ the corresponding output is $\mathbf{y} =f(\mathbf{x}) = \langle y_1, \dots, y_n \rangle \in \mathbb{R}^n$. Typically, we are primarily interested in knowing which output value a DNN will select, corresponding to identifying the output node with the maximum value. Let $g:\mathbb{R}^n \to \mathbb{O}$ (with $\mathbb{O}\defi[1,n] \times\mathbb{R}$), be the function that, given the output of a DNN, returns the classification results in terms of both the index and the node with the greatest value in the output vector. Formally $g(\mathbf{y}) = \langle\argmax(y_1,\dots,y_n), \max(y_1,\dots,y_n)\rangle$.




\paragraph{Abstract DNN's semantics.} Due to the neural network's inherent non-linearity and non-convexity, formal verification typically involves abstracting the input space and evaluating the verification outcome on over-approximated reachable output sets. Let $f^{\sharp}:\wp(\mathbb{R})^m\to\wp(\mathbb{R})^n$ be the abstract semantics of $f$ associating with any $\mathcal{X} \in \wp(\mathbb{R})^m$ a tuple $\mathcal{R} \in \wp(\mathbb{R})^n$.
In detail, $\mathcal{X}$ represents a tuple of possible input sets in $\wp(\mathbb{R})^m$, derived from input perturbations defined by a generic $\ell_p$-norm. Please note the difference between  $\wp(\mathbb{R})^m$ and $\wp(\mathbb{R}^m)$. The former considers a tuple of $m$ independent intervals in $\mathbb{R}$, e.g., $\langle [0,1], [0,1] \rangle$. In contrast, the latter considers a set of subsets in $\mathbb{R}^m$, e.g., $[0,1] \times [0,1] \subseteq \mathbb{R}^2 $, where each element in this subset is a tuple. In this work, we consider the $\ell_\infty$-norm for input perturbations in $\mathcal{X}$, but the same principles also apply to other norms.
Analogously, $\mathcal{R}$ represents the tuple of output reachable sets in $\wp(\mathbb{R})^n$, i.e., the possible subset of outputs achievable when passing $\mathcal{X}$ through $f^{\sharp}$. Without loss of generality, in this paper, we consider hyperrectangles to encode $\mathcal{X}$, i.e., the standard geometric definition as a generalization of a rectangle to multiple dimensions used in abstract interpretation and verification venues. In detail, $\mathcal{X}\in \sset{ \tuple{X_1,\ldots,X_m}}{X_i=[l_i, u_i] \subseteq \mathbb{R}, l_i \leq u_i \;\forall i \in [1,m]}$. Hence, the result of the propagation of $\mathcal{X}$ through $f^{\sharp}$ is $\mathcal{R}\in \sset{ \tuple{Y_1,\ldots,Y_n}}{Y_i= [l_i', u_i'] \subseteq \mathbb{R}, l_i' \leq u_i' \;\forall i \in [1, n]}$. 
We define $\res \defi [1,n] \times \wp(\mathbb{R})$.
Similar to the concrete semantics, we define a function $g^{\sharp}: \wp(\mathbb{R})^n \to \wp(\res)$, which returns the set of both the indices and values of the maximum intervals in a tuple of reachable sets $ \mathcal{R} $. The fact that $ g^{\sharp} $ can return a {\em set} of intervals is due to the non-linearity of the DNN, causing an over-approximation of the reachable output sets and, therefore, leading to potential overlapping. 
Specifically, if $\mathcal{R} = \tuple{ [l_1', u_1'], [l_2', u_2'], \ldots, [l_n', u_n'] }$, then $g^{\sharp}(\mathcal{R})$ returns the set of indices and values of the intervals for which $\max([l_i', u_i'])$ is maximal. For instance, if $\mathcal{R} = \tuple{ [0, 0.8], [0.12, 0.5], [1, 4] }$, then $g^{\sharp}(\mathcal{R})=\{ \tuple{3, [1, 4]} \}$, but if $\mathcal{R} = \tuple{ [2, 5], [0.12, 0.5], [1, 4] }$, then $g^{\sharp}(\mathcal{R}) = \{\tuple{1, [2, 5]}, \tuple{3, [1, 4]}\}$. In Appendix \ref{apx:example_g_sharp}, we report further details of the $g^\#$ procedure.
\clearpage
\paragraph{DNN-Verification problem.} 
\begin{definition}[\textsc{DNN-Verification} Problem]
\label{def:decision_problem}

\phantom{a}

    {\bf Input}: A tuple $\mathcal{T}=\tuple{f^{\sharp},g^\sharp, \mathcal{X}, \mathcal{S}}$.

    {\bf Output}: $\safe \iff g^{\sharp} \circ f^\sharp (\mathcal{X}) \subseteq \mathcal{S}$
\end{definition}

A reachability-based FV tool for neural networks \cite{LiuSurvey} using the abstract DNN semantics takes as input a tuple $\mathcal{T} = \tuple{f^{\sharp}, g^{\sharp}, \mathcal{X}, \mathcal{S}}$, with $\mathcal{X}\in\wp(\mathbb{R})^m$ and $\mathcal{S}\subseteq \res$ characterizing the input-output relationship of the DNN we aim to verify. In this case, $\mathcal{S}$ represents the safe set where the output reachable set should fall for all possible $\mathbf{x} \in \mathcal{X}$. 

\begin{wrapfigure}{r}{0.5\textwidth}
    \centering
    \includegraphics[width=0.48\textwidth]{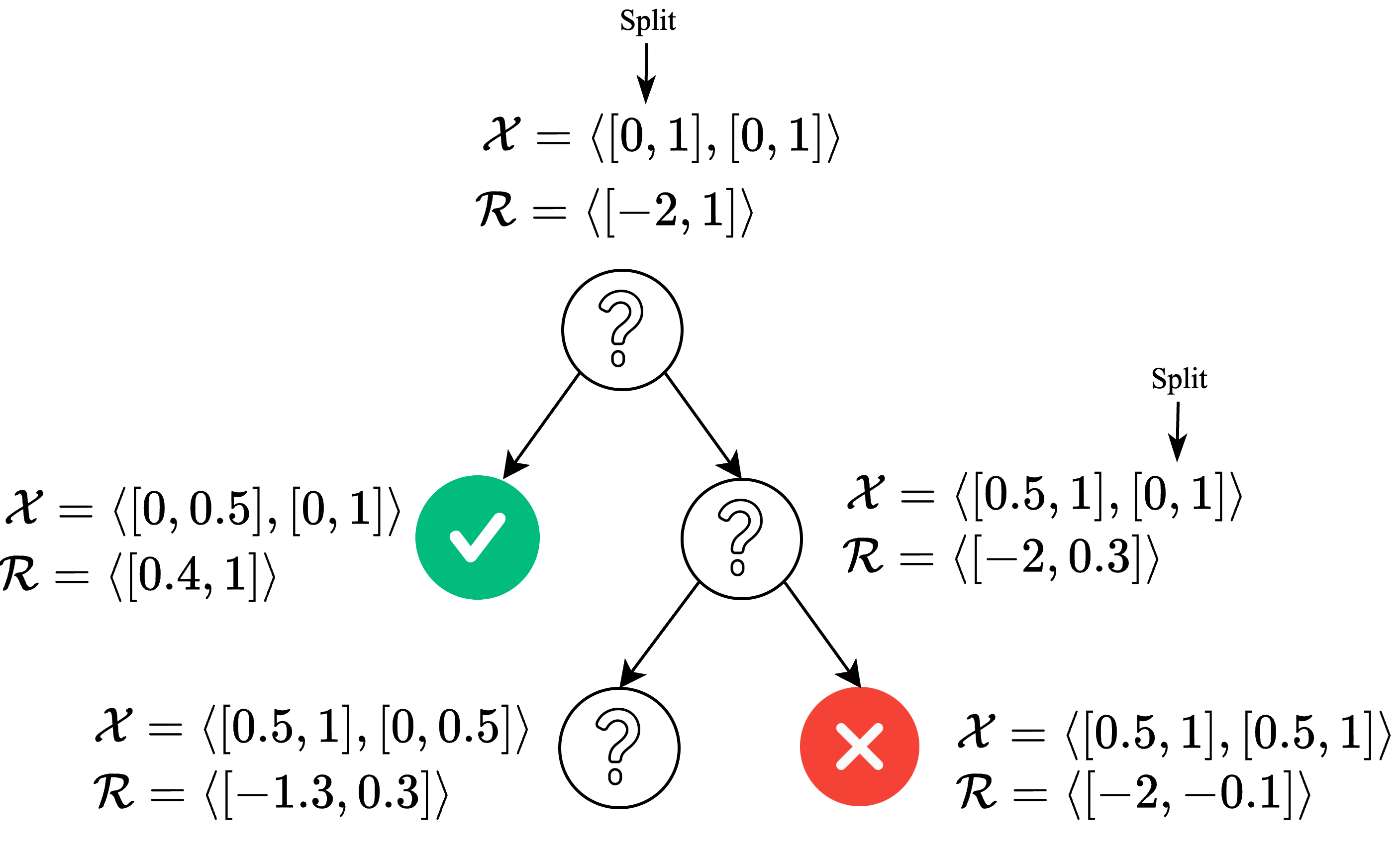}
    \caption{Example of Branch-and-Bound verification process with iterative input refinement approach. In the example we consider the safe set $\mathcal{S} = [0, \infty)$ and, for simplicity, a DNN with a single output node.}
    \label{fig:BaB}
\end{wrapfigure}
By propagating $\mathcal{X}$ through $f^{\sharp}$ and applying $g^{\sharp}$, the tool computes the greatest output reachable set, i.e., the reachable set that has a lower bound greater than all the other upper bounds. It then checks whether it is contained in the desired reachable set, which can be represented either by a safe set $\mathcal{S}$, as in our settings, or an unsafe one. As previously mentioned, the results of $g^{\sharp}$ can produce overlapping reachable sets, thus preventing the possibility of directly stating whether the DNN is safe. To mitigate this issue, FV tools typically split either the elements in the tuple $\mathcal{X}$ into sub-domains \cite{reluval} (each node in Fig.~\ref{fig:BaB}) or unstable neurons in the neural network \cite{bab,bcrown}. When at least an output reachable set of a specific subdomain is not included in $\mathcal{S}$ (the red leaf), the iterative procedure ends---the property is violated since at least one portion of the original domain $\mathcal{X}$ produces a reachable set not included in the desired $\mathcal{S}$. Otherwise, if the tool is sound and complete, it should verify all the leaves before stating that the DNN is provable safe. However, in the worst case, the NP-complete nature of \textsc{DNN-Verification} \cite{Reluplex}, leads to an exponential increase in the number of iterative refinements required to obtain a precise answer, and common incomplete verifiers stop at a given depth returning an \textit{unknown} answer. In this scenario, our approach introduces a potential solution where the safety hierarchy allows in each leaf node the FV tool to provide multiple levels of safety assessments instead of the binary \textit{safe}-\textit{unsafe} classification. This can reduce the number of iterative refinements needed for complete verifiers while it could prevent an \textit{unknown} final outcome for incomplete verifiers.

\section{Related Work}

Existing DNN formal verification tools like AI2\cite{Ai2}, DeepPoly\cite{singh2019abstract}, leverage abstract interpretation to over-approximate the network’s behavior by propagating abstract domains (e.g., intervals, zonotopes) through its layers. This enables efficient verification of properties such as robustness by bounding the output set while maintaining soundness. Another class of tools, such as CROWN, $\alpha$-CROWN and $\beta$-CROWN\cite{crown,acrown,bcrown}, use \textit{linear relaxation} techniques to derive tight linear bounds on outputs by relaxing non-linear activation functions, balancing precision and scalability. These approaches are typically combined with Branch-and-Bound (BaB) \cite{bab} methods, like MN-BaB\cite{MN-BaB}, that divide the original verification problem into smaller subdomains either, for instance, dividing the input perturbation region \cite{reluval} or splitting ReLU neurons into positive/negative linear domains \cite{bab,babSplitRelu}, thus combining precision with completeness at the cost of higher computational effort. Nonetheless, all the state-of-the-art approaches typically produce binary outcomes—either a property holds (safe) or does not hold (unsafe)—and may return \textit{"unknown"} results (for incomplete verifiers) when the abstraction is too coarse.

A similar intuition for reasoning with a hierarchical structure of tolerable unsafe outputs to provide richer analyses has been recently proposed in \cite{giacobazzi2024adversities}, where the concept of \textit{Coherence} as \textit{“weakened robustness properties”} is introduced. 

\begin{definition}[\textit{Coherence} \cite{giacobazzi2024adversities}.]\label{def:coherence}
     Let $\Im: \mathbb{R}^m \to \wp(\mathbb{R}^m)$ be an input perturbation (extensive) function on $\mathbb{R}^m$ (i.e., $\forall\mathbf{x}\in\mathbb{R}^m.\:\mathbf{x}\in\Im(\mathbf{x})$), written $\Im\in\opert(\mathbb{R}^m)$ and $g\comp f:\mathbb{R}^m\rightarrow\mathbb{O}$.
    Let $\mathcal{C}\in\uco(\wp(\mathbb{O}))$\footnote{A function $f:\mathtt{D}\to\mathtt{D}$ is an upper closure operator on $\mathtt{D}$ ($f\in\uco(\mathtt{D})$), if it is idempotent ($\forall x\in\mathtt{D}.\:f\comp f(x)=f(x)$), extensive ($\forall x\in\mathtt{D}.\:f(x)\geq x$), and monotone.} be a domain modeling the categories of classes that can be misclassified.\footnote{For instance in Fig.~\ref{fig:overview} where we have the category "Critical safety sign" modeling the fact that we accept a misclassification between the signs in this category/set} A DNN classifier $g\comp f$ satisfies Coherence w.r.t.\ the input $\Omega\subseteq\mathbb{R}^m$, $\mathcal{C}$ and $\Im$ if $\forall\mathbf
    x\in\Omega.\:\mathcal{C}\comp(g\comp f)\comp\Im(\mathbf{x})\subsetneq\mathbb{O}$, where $g\comp f$ is additively lifted to sets of inputs, those in $\Im(\mathbf{x})$.
\end{definition}

Nonetheless, in \cite{giacobazzi2024adversities}, the authors only focus on verifying whether the {\em concrete} execution of a DNN over a region of inputs (i.e., lifted to sets of concrete inputs in $\mathbb{R}^m$) yields an acceptable answer (i.e., one that is not \textit{“unknown”}) up to a fixed desired output observation, modeled as an abstraction of potential concrete outputs in $\mathbb{O}$. 
In contrast, our approach seeks to handle any {\em over-approximating} execution of the DNN. Specifically, we aim to perturb the abstract input by considering a broader class of abstract inputs rather than (sub)sets of such inputs as defined in Def.~\ref{def:coherence}. Consequently, we start by reformulating the definition using a new, more general concept of input perturbation. We then formally show why our new formulation is necessary and how the reasoning with abstract semantics addresses the limitations of the current definition.
\section{The Abstract DNN-Verification Problem}
This section illustrates how our novel formulation fills the gaps of \cite{giacobazzi2024adversities},  providing a clear motivation for the contribution of this work.
Hence, we start by defining a different notion of input perturbation, widening each element of the input tuple.
This input perturbation is especially useful for reasoning on the DNN's abstract semantics defined in Sec. \ref{sec:preliminaries}.

\begin{definition}[\textit{Widening Input Perturbation function}]
    A {\em widening} input perturbation function $\wIm\in\pert(\wp(\mathbb{R})^m)$ is a function $\wIm:\wp(\mathbb{R})^m \to \wp(\mathbb{R})^m$ such that $\forall \mathcal{X} \in \wp(\mathbb{R})^m.\:  \mathcal{X}\dot{\subseteq} \wIm(\mathcal{X})$, where $\dot{\subseteq}$ is the pointwise extension of $\subseteq$ to tuples of sets. 
\end{definition}
Intuitively, the function $\wIm$ introduces an expansion perturbation, which enlarges the range of intervals to account for more tolerant inputs.


\begin{proposition}\label{prop:interval_abs}
    Given $\mathcal{X}\in\wp(\mathbb{R})^m$ and $\wIm\in\pert(\wp(\mathbb{R})^m)$, then there exists a perturbation $\Im\in\opert(\mathbb{R}^m)$ such that, for any $\mathbf{x}\in\mathcal{X}$, if $\mathbf{z}\in\Im(\mathbf{x})$ then $\mathbf{z}\:\dot{\in}\:\wIm(\mathcal{X})$. Formally, $\forall\mathbf{x}\in\mathbb{R}^m$, $\Im(\mathbf{x})\defi\sset{\mathbf{z}}{\mathbf{z}\:\dot{\in}\:\wIm(\mathcal{X}))}$, where here $\dot{\in}$ is the point-wise $\in$ between tuples, i.e., $\tuple{x_1,\ldots,x_m}\:\dot{\in}\:\tuple{X_1,\ldots,X_m}$ iff $\forall i\in[1,m].\:x_i\in X_i$.
\end{proposition}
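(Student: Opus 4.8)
The idea is to collapse the tuple of sets $\wIm(\mathcal{X})$ into a single region of $\mathbb{R}^m$ and use that region as the image of the concrete perturbation we are after. Write $W\defi\{\,\mathbf{z}\in\mathbb{R}^m\mid\mathbf{z}\:\dot{\in}\:\wIm(\mathcal{X})\,\}$ for the ``flattening'' of $\wIm(\mathcal{X})$ (i.e.\ the Cartesian product of its $m$ components), and define $\Im:\mathbb{R}^m\to\wp(\mathbb{R}^m)$ by $\Im(\mathbf{x})\defi\{\mathbf{x}\}\cup W$. This is the construction in the statement, with the harmless addition of the singleton $\{\mathbf{x}\}$ to make $\Im$ globally extensive. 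First I would check $\Im\in\opert(\mathbb{R}^m)$: by Definition~\ref{def:coherence} this only requires extensivity, i.e.\ $\mathbf{x}\in\Im(\mathbf{x})$ for every $\mathbf{x}\in\mathbb{R}^m$, which is immediate since $\mathbf{x}\in\{\mathbf{x}\}\cup W=\Im(\mathbf{x})$.

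Next I would unfold the widening hypothesis. Since $\wIm\in\pert(\wp(\mathbb{R})^m)$ we have $\mathcal{X}\:\dot{\subseteq}\:\wIm(\mathcal{X})$; writing $\mathcal{X}=\tuple{X_1,\ldots,X_m}$ and $\wIm(\mathcal{X})=\tuple{X_1',\ldots,X_m'}$, this means $X_i\subseteq X_i'$ for every $i\in[1,m]$. Consequently, for any tuple $\mathbf{u}=\tuple{u_1,\ldots,u_m}$, if $\mathbf{u}\:\dot{\in}\:\mathcal{X}$ then $u_i\in X_i\subseteq X_i'$ for all $i$, hence $\mathbf{u}\:\dot{\in}\:\wIm(\mathcal{X})$; in particular every tuple pointwise in $\mathcal{X}$ lies in $W$. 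The implication then follows by a two-case argument: fix $\mathbf{x}$ with $\mathbf{x}\:\dot{\in}\:\mathcal{X}$ and take $\mathbf{z}\in\Im(\mathbf{x})=\{\mathbf{x}\}\cup W$. If $\mathbf{z}\in W$, then by definition of $W$ we get $\mathbf{z}\:\dot{\in}\:\wIm(\mathcal{X})$; if instead $\mathbf{z}=\mathbf{x}$, then applying the previous observation to $\mathbf{u}=\mathbf{x}$ again gives $\mathbf{z}=\mathbf{x}\:\dot{\in}\:\wIm(\mathcal{X})$. In both cases the required conclusion holds.

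\textbf{Main obstacle.} There is no deep obstacle here: the proposition is essentially a bookkeeping translation between the pointwise order $\dot{\subseteq}$ on tuples of sets and pointwise membership $\dot{\in}$, and the construction of $\Im$ is dictated by the statement itself. The one point that needs a moment's care is that the literal formula $\Im(\mathbf{x})\defi\{\,\mathbf{z}\mid\mathbf{z}\:\dot{\in}\:\wIm(\mathcal{X})\,\}$ defines a \emph{constant} function, which fails extensivity at any $\mathbf{x}\notin W$ (for instance any point lying outside the widened box) and is therefore not, strictly speaking, in $\opert(\mathbb{R}^m)$. Adjoining $\{\mathbf{x}\}$ (or, equivalently, letting $\Im$ return $W$ on tuples pointwise in $\mathcal{X}$ and the bare singleton elsewhere) repairs this without affecting the implication, since the implication's hypothesis already confines $\mathbf{x}$ to the tuples pointwise in $\mathcal{X}$, all of which belong to $W$.
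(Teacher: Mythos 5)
Your proof is correct and takes the same route the paper does: the paper gives no separate proof at all, relying entirely on the constant construction $\Im(\mathbf{x})\defi\sset{\mathbf{z}}{\mathbf{z}\:\dot{\in}\:\wIm(\mathcal{X})}$ embedded in the statement, under which the implication is immediate and your unfolding of $\mathcal{X}\:\dot{\subseteq}\:\wIm(\mathcal{X})$ into pointwise membership is exactly the intended bookkeeping. Your observation that the literal constant function fails extensivity outside the widened box, and hence is not strictly in $\opert(\mathbb{R}^m)$ as defined in Definition~\ref{def:coherence}, is a genuine (if minor) flaw in the paper's statement, and your repair by adjoining $\{\mathbf{x}\}$ is harmless and correct.
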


Now, let us define the novel notion of {\em Abstract Coherence}, extending the one defined in \cite{giacobazzi2024adversities} to abstract executions of a DNN w.r.t. a given $g^\sharp$.

\begin{definition}[\textit{Abstract Coherence}]\label{def:abscohe}
    Let $f$ be a DNN, and $f^\sharp$ its abstraction. Consider $\mathcal{C}\in\uco(\wp(\res))$,  and $\wIm\in\pert(\wp(\mathbb{R})^m)$. $f$ satisfies Abstract Coherence w.r.t.\ $\tuple{f^\sharp,\:\wIm,\:\mathcal{C}}$ (on $\mathcal{X}\in\wp(\mathbb{R})^m$) if $\mathcal{C}\comp(g^\sharp\circ f^\sharp)\comp\wIm(\mathcal{X})\subsetneq\res$.
\end{definition}

Broadly speaking, $\mathcal{C}$ is the function that abstracts the result of the abstracted DNN, potentially applied to input perturbations, determining whether the results include an {\em acceptable} degree of error (being strictly contained in $\res$), or lose too much (reaching $\res$). 
The following formal result proves our formulation's generality (and novelty) with respect to the one of \cite{giacobazzi2024adversities}.

\begin{theorem}\label{thm:abs_coherence}
\textit{Abstract Coherence} (on $\mathcal{X}\in\wp(\mathbb{R})^m$) implies \textit{Coherence} on any $\mathbf{x}\in\mathcal{X}$, but not vice-versa.
\end{theorem}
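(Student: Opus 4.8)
The plan is to prove the implication direction first and then construct an explicit counterexample for the non-implication. For the forward direction, suppose $f$ satisfies \emph{Abstract Coherence} on $\mathcal{X}\in\wp(\mathbb{R})^m$ w.r.t. $\tuple{f^\sharp,\wIm,\mathcal{C}}$, i.e. $\mathcal{C}\comp(g^\sharp\circ f^\sharp)\comp\wIm(\mathcal{X})\subsetneq\res$. Fix an arbitrary $\mathbf{x}\in\mathcal{X}$. I would invoke Proposition~\ref{prop:interval_abs} to obtain the concrete perturbation $\Im\in\opert(\mathbb{R}^m)$ such that every $\mathbf{z}\in\Im(\mathbf{x})$ satisfies $\mathbf{z}\:\dot{\in}\:\wIm(\mathcal{X})$. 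The key linking step is a soundness/consistency fact between the concrete and abstract semantics: for each such $\mathbf{z}$, the concrete output $g(f(\mathbf{z}))$ is ``covered'' by the abstract result $g^\sharp(f^\sharp(\wIm(\mathcal{X})))$ — more precisely, $g\comp f(\mathbf{z})$ lies in (the concretization of) some element of $g^\sharp\circ f^\sharp\comp\wIm(\mathcal{X})$. Hence $(g\comp f)\comp\Im(\mathbf{x})$, lifted additively to sets, is contained in (the concretization of) $(g^\sharp\circ f^\sharp)\comp\wIm(\mathcal{X})$. Applying $\mathcal{C}$ and using monotonicity of the closure operator together with the correspondence between $\mathcal{C}\in\uco(\wp(\res))$ and the induced $\mathcal{C}\in\uco(\wp(\mathbb{O}))$ on concrete outputs, I get $\mathcal{C}\comp(g\comp f)\comp\Im(\mathbf{x})\subseteq\mathcal{C}\comp(g^\sharp\circ f^\sharp)\comp\wIm(\mathcal{X})\subsetneq\res$, and pulling back strict containment in $\res$ to strict containment in $\mathbb{O}$ yields $\mathcal{C}\comp(g\comp f)\comp\Im(\mathbf{x})\subsetneq\mathbb{O}$, which is exactly \emph{Coherence} at $\mathbf{x}$.

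For the converse failure, I would exhibit a small concrete instance — a one- or two-output DNN suffices — where, for every $\mathbf{x}\in\mathcal{X}$, the concrete trace $(g\comp f)\comp\Im(\mathbf{x})$ stays within a proper subset of $\mathbb{O}$ (so \emph{Coherence} holds pointwise), yet the over-approximation error of $f^\sharp$ on the widened input $\wIm(\mathcal{X})$ is so severe that the reachable output intervals all overlap, forcing $g^\sharp\circ f^\sharp\comp\wIm(\mathcal{X})$ to contain every index, after which $\mathcal{C}$ can only send this to $\res$ itself, so \emph{Abstract Coherence} fails. Concretely, one can reuse the kind of overlapping-reachable-set situation already illustrated in the preliminaries (e.g. reachable sets like $\tuple{[2,5],[0.12,0.5],[1,4]}$ where $g^\sharp$ returns two indices) and push it further so all $n$ indices appear; since the concrete outputs are single points this blow-up is purely an artifact of abstraction and widening. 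This establishes that the implication is strict.

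The main obstacle I anticipate is making the ``covering'' step in the forward direction fully rigorous, because the excerpt has set up $\res=[1,n]\times\wp(\mathbb{R})$ and $\mathbb{O}=[1,n]\times\mathbb{R}$ and the closure operators live on $\wp(\res)$ versus $\wp(\mathbb{O})$, so I need a clean statement of how a concrete classification outcome $\tuple{i,v}\in\mathbb{O}$ relates to an abstract one $\tuple{i,V}\in\res$ (namely $v\in V$ and $i$ is among the $g^\sharp$-maximal indices), and how $\mathcal{C}$ on the abstract side restricts to / is compatible with $\mathcal{C}$ on the concrete side. If the paper has in mind a fixed correspondence (e.g. concretizing $\tuple{i,V}$ to $\{i\}\times V$), the argument is routine; the care is in not conflating the two closure operators. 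A secondary subtlety is that $\Im$ as defined in Proposition~\ref{prop:interval_abs} depends on $\mathcal{X}$, so I should be explicit that \emph{Coherence} is being witnessed by this particular $\Im$, which is exactly what the theorem statement (``\emph{Coherence} on any $\mathbf{x}\in\mathcal{X}$'') permits.
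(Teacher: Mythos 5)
Your proposal follows essentially the same route as the paper: the forward direction combines Proposition~\ref{prop:interval_abs} with the soundness inclusion $(g\comp f)(\mathcal{Z})\subseteq(g^\sharp\comp f^\sharp)(\mathcal{Z})$ and monotonicity of the closure $\mathcal{C}$, and the converse is refuted by a concrete DNN whose over-approximated reachable sets overlap enough that $\mathcal{C}$ collapses the abstract result to $\res$ --- exactly the counterexample the paper develops (in the appendix), while its main-text argument for the converse only sketches the same intuition. The $\wp(\mathbb{O})$-versus-$\wp(\res)$ type mismatch you flag is real and is silently elided in the paper's own proof as well, so your proposed concretization $\tuple{i,V}\mapsto\{i\}\times V$ is a sensible way to make the shared argument fully precise.
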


\begin{proof}
Let $f^\sharp$ be the abstraction of a DNN $f$ and $g^\sharp$ the decision function on the abstracted results. Consider $\wIm\in\pert(\wp(\mathbb{R})^m)$ and $\mathcal{C}\in\uco(\wp(\res))$. Suppose to have \textit{Abstract Coherence} of the DNN $g\comp f$ (w.r.t.\ $\tuple{f^\sharp,\:\wIm,\:\mathcal{C}}$). 
From Prop.~\ref{prop:interval_abs} we have that 
for any $\mathbf{x}\;\dot{\in}\;\mathcal{X}$,  $\mathbf{z}\in\Im(\mathbf{x})$ implies $\mathbf{z}\:\dot{\in}\:\wIm(\mathcal{X})$. 
Moreover, $\forall\mathcal{Z}\in\wp(\mathbb{R})^m$ we have $g\comp f(\mathcal{Z})\defi\sset{g\comp f(\mathbf{z})}{\mathbf{z}\:\dot{\in}\:\mathcal{Z}}\subseteq g^\sharp\comp f^\sharp(\mathcal{Z})$. Suppose now $\mathcal{C}\comp(g^\sharp\comp f^\sharp)\comp\wIm(\mathcal{X})\subsetneq\res$, then  by monotonicity of $\mathcal{C}$, $\forall\mathbf{x}\;\dot{\in}\;\mathcal{X}$ we have $\mathcal{C}\comp(g\comp f)\comp\Im(\mathbf{x})\subseteq\mathcal{C}\comp(g\comp f)(\wIm(\mathcal{X}))\subseteq \mathcal{C}\comp(g^\sharp\comp f^\sharp)(\wIm(\mathcal{X}))\subsetneq\res$. Hence, \textit{Abstract Coherence} w.r.t.\ $\tuple{f^\sharp,\:\wIm,\:\mathcal{C}}$ on $\mathcal{X}$ implies \textit{Coherence} w.r.t.\ $\tuple{\Im,\mathcal{C}}$ on any $\mathbf{x}\;\dot{\in}\;\mathcal{X}$.\\
Let us provide the intuition that the opposite implication, in general, does not hold. Consider $\mathcal{X}\subseteq \mathbb{R}^m$, and suppose $\forall\mathbf{x}\in\mathcal{X}.\:\mathcal{C}\comp(g\comp f)\comp\Im(\mathbf{x})\subsetneq\res$  (\textit{Coherence}). Let us define $\wIm(\mathcal{X})\defi\tuple{X_1,\ldots,X_m}\in\wp(\mathbb{R})^m$ such that $X_i=\sset{x_i}{\tuple{x_1,\ldots,x_m}\in\Im(\mathbf{x}),\ \mathbf{x}\;\dot{\in}\;\mathcal{X}}$.
Then $\mathcal{C}\comp(g\comp f)\comp\Im(\mathbf{x})\subseteq\mathcal{C}\comp(g\comp f)\comp\wIm(\mathcal{X})\subseteq\mathcal{C}\comp(g^\sharp\comp f^\sharp)\comp\wIm(\mathcal{X})$. Therefore we can always find a DNN approximation $f^\sharp$, a class abstraction $\mathcal{C}$ and an input perturbation $\Im$ such that $\mathcal{C}\comp(g\comp f)\comp\Im(\mathbf{x})\subsetneq\res$ does not imply $\mathcal{C}\comp(g^\sharp\comp f^\sharp)\comp\wIm(\mathcal{X})\subsetneq\res$.
\end{proof}

For completeness, we report in Appendix \ref{apx:example_proof} a concrete example where \textit{Coherence} does not imply \textit{Abstract Coherence}.

At this point, we can define the \textsc{Abstract DNN-Verification} ({\sc ADV}) problem as an instantiation of Def.~\ref{def:abscohe}, verifying whether the result of the \textsc{DNN-Verification} applied to the perturbation of all set of abstracted inputs $\wIm(\mathcal{X})$ still respect the safe (or at least not unsafe) set determined by the abstraction $\mathcal{C}$. 

\begin{tcolorbox}
  \vspace{-0.2cm}
  \begin{definition}[\sc Abstract DNN-Verification]\label{def:ADV}
  
\vspace{2mm}

\phantom{a}

\vspace{2mm}

    \hspace{3.5mm}{\bf Input}: A tuple $\mathcal{T}=\langle f^{\sharp}, g^{\sharp},\wIm(\mathcal{X}), \mathcal{C}\rangle$ 

    \vspace{1mm}

    \hspace{3.5mm}{\bf Output}: \textit{Abstract Safe} $\Leftrightarrow 
    \mathcal{C}\comp(g^\sharp\comp f^{\sharp})\comp\wIm(\mathcal{X})\subsetneq \res$

  \vspace{-0.1cm}
 \end{definition}
\end{tcolorbox}

Where we recall that $\res$ here represents an undesired, unsafe potential result in the class prediction. 

With this formalization, the \textsc{DNN-Verification} problem becomes an instantiation of the abstract one, with $\wIm=\id$ the identity function, and $\mathcal{C}(g^\sharp\comp f^{\sharp}(\mathcal{X})) = \mathcal{S}\subsetneq\res$\footnote{If $\mathcal{S}=\res$ the problem is meaningless, becoming always safe.} if $g^\sharp\comp f^{\sharp}(\mathcal{X})\subseteq\mathcal{S}$, $\mathcal{C}(g^\sharp\comp f^{\sharp}(\mathcal{X}))=\res$, unsafe, otherwise. Hence, we have the following result.

\begin{proposition}\label{prop:safe_implication}
    If a DNN is safe w.r.t.\ the safe set $\mathcal{S}\subsetneq\res$, then it is also abstract safe w.r.t.\ any abstraction $\mathcal{C}\in\uco(\wp(\res))$ such that $\mathcal{S}\in\mathcal{C}$.
\end{proposition}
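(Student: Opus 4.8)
The plan is to unwind the two instantiations that turn the abstract problem into the concrete one, and show that the ``safe'' hypothesis forces the output of \textsc{Abstract DNN-Verification} to be \textit{Abstract Safe}. Concretely, suppose $f$ is safe w.r.t.\ $\mathcal{S}\subsetneq\res$, i.e., $g^\sharp\comp f^\sharp(\mathcal{X})\subseteq\mathcal{S}$ (Def.~\ref{def:decision_problem}). Fix any $\mathcal{C}\in\uco(\wp(\res))$ with $\mathcal{S}\in\mathcal{C}$ (that is, $\mathcal{S}$ is a fixpoint of $\mathcal{C}$). I want to conclude $\mathcal{C}\comp(g^\sharp\comp f^\sharp)\comp\wIm(\mathcal{X})\subsetneq\res$ for the instantiation $\wIm=\id$, i.e., $\mathcal{C}(g^\sharp\comp f^\sharp(\mathcal{X}))\subsetneq\res$.

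First I would set $R\defi g^\sharp\comp f^\sharp(\mathcal{X})\in\wp(\res)$. By the safety hypothesis, $R\subseteq\mathcal{S}$. Since $\mathcal{C}$ is monotone, $\mathcal{C}(R)\subseteq\mathcal{C}(\mathcal{S})$, and since $\mathcal{S}\in\mathcal{C}$ (it is a closed element, hence a fixpoint of the closure operator), $\mathcal{C}(\mathcal{S})=\mathcal{S}$. Therefore $\mathcal{C}(R)\subseteq\mathcal{S}$. Combining with $\mathcal{S}\subsetneq\res$ (which holds by the hypothesis on the safe set, and is exactly the non-degeneracy condition noted in the footnote after Def.~\ref{def:ADV}), we get $\mathcal{C}(R)\subseteq\mathcal{S}\subsetneq\res$, hence $\mathcal{C}(R)\subsetneq\res$. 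This is precisely the output condition of Def.~\ref{def:ADV} with $\wIm=\id$, so the DNN is \textit{Abstract Safe} w.r.t.\ $\mathcal{C}$.

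The only subtlety, and the step I would be most careful about, is the identification ``$\mathcal{S}\in\mathcal{C}$ means $\mathcal{C}(\mathcal{S})=\mathcal{S}$'': this is the standard correspondence between a closure operator and its set of closed elements (fixpoints), which is why the proposition phrases the assumption as $\mathcal{S}\in\mathcal{C}$ rather than $\mathcal{C}(\mathcal{S})=\mathcal{S}$. I would state this correspondence explicitly, citing the definition of $\uco$ given in the footnote to Def.~\ref{def:coherence} (idempotency plus extensivity), and then the argument is a two-line chain of inclusions using only monotonicity of $\mathcal{C}$ and transitivity of $\subseteq$. No genuine obstacle arises; the content of the proposition is really just a sanity check that the concrete problem embeds faithfully into the abstract one, so the proof is short by design.
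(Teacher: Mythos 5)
Your proof is correct and matches the argument the paper intends: the paper states Proposition~\ref{prop:safe_implication} without a separate proof, relying on the instantiation $\wIm=\id$ described just before it, and your chain $g^\sharp\comp f^\sharp(\mathcal{X})\subseteq\mathcal{S}$, hence $\mathcal{C}(g^\sharp\comp f^\sharp(\mathcal{X}))\subseteq\mathcal{C}(\mathcal{S})=\mathcal{S}\subsetneq\res$ by monotonicity and the fixpoint reading of $\mathcal{S}\in\mathcal{C}$, is exactly the justification that is left implicit. Your explicit remark that ``$\mathcal{S}\in\mathcal{C}$'' means $\mathcal{S}$ is a closed element (so $\mathcal{C}(\mathcal{S})=\mathcal{S}$) is the right point to make precise, and consistent with how the paper writes closures as their sets of fixpoints (e.g., $\mathcal{C}=\{\res,\{c_1,c_2,c_3,c_5\},\mathcal{S}\}$).
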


Hence, the design of $\mathcal{C}$ defines the tolerable level of misclassification. To clarify this, we provide an illustrative example.
\begin{figure}[h!]
	\begin{center}
		\scalebox{0.55} {
			\def\layersep{3cm}
			\begin{tikzpicture}[shorten >=1pt,->,draw=black!50, node
				distance=\layersep,font=\footnotesize]
				
				\node[input neuron] (I-1) at (0,-1) {$x_1$};
				\node[input neuron] (I-2) at (0,-2.5) {$x_2$};
                \node[input neuron] (I-3) at (0,-4) {$x_3$};

				\node[left=-0.05cm of I-1] (b1) {$X_1=[0, 1]$};
				\node[left=-0.05cm of I-2] (b2) {$X_2=[0, 1]$};
                \node[left=-0.05cm of I-3] (b3) {$X_3=[0.8, 1]$};
				
				\node[hidden neuron] (H-1) at (1.2*\layersep,-1) {$0$};
				\node[hidden neuron] (H-2) at (1.2*\layersep,-2.5) {$0$};
                \node[hidden neuron] (H-5) at (1.2*\layersep,-4) {$0$};
				
				\node[hidden neuron] (H-3) at (1.8*\layersep,-1) {$h_1^a$};
				\node[hidden neuron] (H-4) at (1.8*\layersep,-2.5) {$h_2^a$};
                \node[hidden neuron] (H-6) at (1.8*\layersep,-4) {$h_3^a$};
				
			    \node[output neuron] at (3*\layersep, 0.5) (O-0) {$3.8$};
                \node[output neuron] at (3*\layersep, -1) (O-1) {$3.5$};
                \node[output neuron] at (3*\layersep, -2.5) (O-2) {$3.4$};
                \node[output neuron] at (3*\layersep, -4) (O-3) {$-2$};
                \node[output neuron] at (3*\layersep, -5.5) (O-4) {$3.2$};

                \node[below=0.05cm of O-0]{$y_1$};
                \node[below=0.05cm of O-1]{$y_2$};
                \node[below=0.05cm of O-2]{$y_3$};
                \node[below=0.05cm of O-3]{$y_4$};
                \node[below=0.05cm of O-4]{$y_5$};

                \node[right=0.05cm of O-0] (b0) {$Y_1=[2.3, 7.4]$};
                \node[right=0.05cm of O-1] (b1) {$Y_2=[7.8, 14.7]$};
                \node[right=0.05cm of O-2] (b2) {$Y_3=[5.4, 6.62]$};
                \node[right=0.05cm of O-3] (b1) {$Y_4=[-11, -5.2]$};
                \node[right=0.05cm of O-4] (b2) {$Y_5=[7.8, 17.4]$};

				
				\draw[nnedge] (I-1) --node[above, pos=0.25] {$4$} (H-1);
				\draw[nnedge] (I-1) --node[above, pos=0.25] {$-2$} (H-2);
                \draw[nnedge] (I-1) --node[above, pos=0.25] {$1$} (H-5);
				\draw[nnedge] (I-2) --node[above, pos=0.1] {$-1$} (H-1);
                \draw[nnedge] (I-2) --node[above, pos=0.25] {$3$} (H-2);
				\draw[nnedge] (I-2) --node[below, pos=0.1] {$0$} (H-5);
                \draw[nnedge] (I-3) --node[below, pos=0.45] {$2$} (H-1);
                \draw[nnedge] (I-3) --node[below, pos=0.3] {$-1$} (H-2);
				\draw[nnedge] (I-3) --node[below] {$5$} (H-5);
				
				\draw[nnedge] (H-1) --node[above] {ReLU} (H-3);
				\draw[nnedge] (H-2) --node[below] {ReLU} (H-4);
                \draw[nnedge] (H-5) --node[below] {ReLU} (H-6);

                \draw[nnedge] (H-3) --node[above, pos=0.8] {$0.5$} (O-0);
				\draw[nnedge] (H-3) --node[above, pos=0.88] {$0.5$} (O-1);
                \draw[nnedge] (H-3) --node[above, pos=0.85] {$0$} (O-2);
                \draw[nnedge] (H-3) --node[above, pos=0.85] {$-0.5$} (O-3);
                \draw[nnedge] (H-3) --node[above, pos=0.85] {$1$} (O-4);

                \draw[nnedge] (H-4) --node[above, pos=0.8] {$-1$} (O-0);
				\draw[nnedge] (H-4) --node[above, pos=0.78] {$1$} (O-1);
                \draw[nnedge] (H-4) --node[above, pos=0.78] {$0.1$} (O-2);
                \draw[nnedge] (H-4) --node[above, pos=0.77] {$0.5$} (O-3);
                \draw[nnedge] (H-4) --node[above, pos=0.75] {$1$} (O-4);

                \draw[nnedge] (H-6) --node[above, pos=0.8] {$0.1$} (O-0);
				\draw[nnedge] (H-6) --node[above, pos=0.78] {$1$} (O-1);
                \draw[nnedge] (H-6) --node[above, pos=0.78] {$0.5$} (O-2);
                \draw[nnedge] (H-6) --node[above, pos=0.75] {$-1$} (O-3);
                \draw[nnedge] (H-6) --node[above, pos=0.75] {$1$} (O-4);


				\node[below=0.05cm of H-1] (b1) {$[-1, 6]$};
				\node[below=0.05cm of H-2] (b2) {$[-3, 3]$};
                \node[below=0.05cm of H-5] (b3) {$[0, 6]$};

				\node[below=0.05cm of H-3] (b1) {$[0, 6]$};
				\node[below=0.05cm of H-4] (b2) {$[0, 3]$};
                \node[below=0.05cm of H-6] (b3) {$[0, 6]$};


		
				\node[annot,above of=H-1, node distance=0.8cm] (hl1) {Weighted
					sum};
				\node[annot,above of=H-3, node distance=0.8cm] (hl2) {Activated Layer };
				\node[annot,above of=I-1, node distance=0.8cm] {Input };
				\node[annot,above of=O-0, node distance=0.8cm] {Output };
			\end{tikzpicture}
		} 
		
	\end{center}
    \caption{Toy DNN employed in the \textsc{Abstract DNN-Verification} example. $\res\defi\{c_i\defi\tuple{i,Y_i}~|~i\in[1,5]\}$.}
    \label{fig:SingleRechableSet2}
\end{figure}
 
\begin{example}\label{ex:gerarchia}
Consider the DNN in Fig.~\ref{fig:SingleRechableSet2}, where the abstracted input is $\mathcal{X}\defi\tuple{[0,1],[0,1]$,$[0.8,1]}$ and suppose $\mathcal{S}=\{c_2, c_3, c_5\}\subsetneq\res$. This means that both $c_1$ and $c_4$ are unsafe, but we can suppose that there is a different degree of unsafeness between them, namely, $c_4$ is completely unsafe, while $c_1$ is only potentially unsafe (for instance, because it is quite unlikely to achieve). Starting from the concrete safe output set, we can define an output abstraction setting a tolerable misclassification, i.e., $\mathcal{C}=\{\res, \{c_1,c_2,c_3,c_5\},\mathcal{S}\}$ (i.e., from  $\mathcal{S}$ we allow a more tolerable {\em unprecise} classification in $\{c_1, c_2, c_3, c_5\}$). We then consider the DNN as {\em abstract safe} if and only if the output abstraction of the network for $\mathcal{X}$ satisfies \textit{Abstract Coherence} w.r.t.\ $\tuple{f^\sharp,g^\sharp,\id,\mathcal{C}}$, namely $\mathcal{C}\comp(g^\sharp\comp f^\sharp)(\mathcal{X})$ is different from $\res$ (here we suppose not to perturb the input, i.e., $\wIm=\id$).
\end{example}

Although lifting the output of $g^{\sharp}$ can yield more informative answers with potentially fewer verification queries, the \textsc{Abstract DNN-Verification} problem, in the worst case, remains difficult to solve. Even with a large abstraction of tolerable answers, the number of iterative refinements required to reach a solution could still grow exponentially with the size of the instance. As a result, we have the following.

\begin{proposition}\label{prop:NPC}
    The \textsc{Abstract DNN-Verification} problem is NP-Complete.
\end{proposition}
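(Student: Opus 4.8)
The plan is to prove the two standard ingredients: (i) that \textsc{Abstract DNN-Verification} lies in NP, and (ii) that it is NP-hard, the latter by a reduction from the ordinary \textsc{DNN-Verification} problem (Def.~\ref{def:decision_problem}), which is NP-complete \cite{Reluplex}. The hardness half carries essentially all the content, and it is nearly free given the machinery already developed: the key observation is that the paragraph following Def.~\ref{def:ADV} exhibits \textsc{DNN-Verification} as a special case of \textsc{ADV}, and this embedding is exactly the reduction we need.

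For \textbf{hardness}, given an instance $\mathcal{T}=\tuple{f^\sharp,g^\sharp,\mathcal{X},\mathcal{S}}$ of \textsc{DNN-Verification} with $\mathcal{S}\subsetneq\res$, I would build the \textsc{ADV} instance $\tuple{f^\sharp,g^\sharp,\wIm(\mathcal{X}),\mathcal{C}}$ taking $\wIm=\id$ and $\mathcal{C}$ the closure whose image is the two-element chain $\{\mathcal{S},\res\}$, i.e.\ $\mathcal{C}(Z)=\mathcal{S}$ if $Z\subseteq\mathcal{S}$ and $\mathcal{C}(Z)=\res$ otherwise. One checks $\mathcal{C}\in\uco(\wp(\res))$ (extensive, monotone, idempotent; its image is intersection-closed and contains the top $\res$, using $\mathcal{S}\subsetneq\res$). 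The map $\mathcal{T}\mapsto\tuple{f^\sharp,g^\sharp,\wIm(\mathcal{X}),\mathcal{C}}$ copies $f^\sharp,g^\sharp,\mathcal{X}$ and records $\mathcal{S}$ as the two-point closure, hence runs in polynomial time, and it is a correct reduction: by the discussion following Def.~\ref{def:ADV} together with Prop.~\ref{prop:safe_implication}, the produced instance is \emph{Abstract Safe} iff $\mathcal{C}\comp(g^\sharp\comp f^\sharp)(\mathcal{X})\neq\res$ iff $g^\sharp\comp f^\sharp(\mathcal{X})\subseteq\mathcal{S}$ iff $\mathcal{T}$ is \emph{Safe}. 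Thus \textsc{ADV} is NP-hard.

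For \textbf{membership in NP}, I would argue as for \textsc{DNN-Verification} itself: $f^\sharp$ is piecewise affine, each piece indexed by a ReLU activation pattern, and over each such piece intersected with the hyperrectangle $\wIm(\mathcal{X})$ (computable from $\mathcal{X}$) the reachable output bounds, the value of $g^\sharp$, and hence the contribution to $g^\sharp\comp f^\sharp\comp\wIm(\mathcal{X})$ are pinned down by a polynomial-size linear program. A nondeterministic machine therefore guesses a polynomially-bounded family of activation patterns that suffices to determine the value of $\mathcal{C}\comp(g^\sharp\comp f^\sharp)\comp\wIm(\mathcal{X})$ relative to $\res$, checks feasibility of each guessed pattern on $\wIm(\mathcal{X})$ by LP in polynomial time, assembles the resulting decision set, applies $\mathcal{C}$ and tests whether it equals $\res$; under the standard assumption that $\mathcal{C}$ is given by a polynomial-time closure/membership oracle (equivalently, has polynomially bounded image) the last step is polynomial. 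Combined with the hardness reduction this gives NP-completeness.

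The main obstacle is precisely this membership half. Unlike the binary setting, $g^\sharp\comp f^\sharp\comp\wIm(\mathcal{X})$ aggregates over exponentially many linear regions, so a single activation pattern is not a valid certificate; I would need to argue that a polynomial number of patterns always suffices to determine $\mathcal{C}\comp(g^\sharp\comp f^\sharp)\comp\wIm(\mathcal{X})$ up to equality with $\res$ (intuitively, at most one reachable unsafe index per relevant missing element of the closure), and to fix an encoding of $\mathcal{C}$ making its evaluation and the comparison with $\res$ polynomial. The hardness half, by contrast, is bookkeeping once the two-point closure is identified.
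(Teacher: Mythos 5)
Your hardness half is essentially the paper's argument, done more carefully. The paper's proof consists of one sentence: hardness ``directly follows'' from the \textsc{3-SAT} reduction to \textsc{DNN-Verification} of \cite{Reluplex}, leaving implicit the embedding of \textsc{DNN-Verification} into \textsc{ADV}. You make that embedding explicit ($\wIm=\id$, $\mathcal{C}$ the two-point closure recording $\mathcal{S}$) and check that the closure is a legitimate element of $\uco(\wp(\res))$; this is exactly the reduction the paper relies on via the discussion after Def.~\ref{def:ADV} and Prop.~\ref{prop:safe_implication}, so nothing is lost and some rigor is gained.

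The membership half is where you genuinely diverge, and where your proposal is incomplete by your own admission. The paper does not reason about activation patterns at all: it takes $f^\sharp$ to be a fixed, polynomial-time computable abstraction (interval or zonotope propagation), so that $f^\sharp(\wIm(\mathcal{X}))$, then $g^\sharp$ via Alg.~\ref{alg1}, then the containment test against $\mathcal{C}$ are all evaluated deterministically in polynomial time; the ``certificate'' is just the resulting tuple of reachable intervals. Under that reading your LP-over-activation-patterns machinery is unnecessary and the obstacle you flag disappears. Under your reading --- $f^\sharp$ as the exact reachable-set semantics, which is what the branch-and-bound discussion and the NP-hardness claim actually presuppose --- the obstacle is real: $g^\sharp\comp f^\sharp\comp\wIm(\mathcal{X})$ is determined by exponentially many linear regions, and you have not shown that polynomially many activation patterns suffice to certify the value of $\mathcal{C}\comp(g^\sharp\comp f^\sharp)\comp\wIm(\mathcal{X})$, so your membership proof has a genuine gap. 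It is worth noting that your gap also exposes a tension the paper glosses over: if the whole decision procedure is polynomial, as the paper's membership argument asserts, then NP-hardness would place the problem in P, so the two halves of the paper's proof are tacitly about two different readings of $f^\sharp$. To close your proof, either adopt the paper's cheap-abstraction reading for membership (making it trivial) or supply the missing polynomial bound on certificate size for the exact semantics.
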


In Appendix \ref{apx:hardness}, we fully discuss the hardness of the problem. Nevertheless, abstracting the safety test and allowing the formal verification tool to provide multiple responses, rather than simply classifying the system as \textit{safe} or \textit{unsafe}, can potentially reduce the cost of the iterative refinement process, improving scalability in practice. We will demonstrate this aspect in practice through our empirical evaluation in Sec. \ref{sec:empirical_eval}.

\subsection{Applying \textsc{ADV}: a Simple Example}

Let us consider again the DNN in Fig.~\ref{fig:SingleRechableSet2}, the abstracted input $\mathcal{X}$ and the safety set $\mathcal{S}=\{c_2, c_3, c_5\}\subsetneq\res$ as in Example~\ref{ex:gerarchia}.
The propagation of $\mathcal{X}$ through $f^{\sharp}$ produces the abstracted DNN output $\mathcal{R} = \tuple{[2.3, 7.4],[7.8, 14.7],[5.4, 6.62], [-11, -5.2], [7.8, 17.4]}$. In this situation, $g^\sharp(\mathcal{R})=\{\tuple{2,Y_2},\tuple{5,Y_5}\}=\{c_2,c_5\}\subseteq\mathcal{S}$, where both $c_2$ and $c_5$ are provable safe. This corresponds to \textit{Abstract Coherence} with $\mathcal{C}_s\defi\{\res,\mathcal{S}\}$.
This result tells us that we do not know which of the outputs will be chosen between $c_2$ and $c_5$, but we know that it will be one of these two, and both are safe. And, formally, we have $\mathcal{C}_s(g^{\sharp}(\mathcal{R})) = \mathcal{C}_s(\{c_2,c_5\})= \{c_2, c_3, c_5\}\subsetneq\res$, meaning that the DNN is provable safe, and thus for Proposition \ref{prop:safe_implication} also \textit{abstract} safe.\\
By applying the input perturbation $\wIm$, such as $\varepsilon = 0.8$, on one input feature at a time, we can analyze how variations in individual inputs influence the output's abstract results, allowing us to gain additional safety insights into the DNN's behavior.
In detail, let us consider $\wIm(\mathcal{X}) \defi \langle[0,1], [0,1], [0,1]\rangle$, i.e., we perturb the last input feature's lower bound by $\varepsilon$. Hence, we have that $f^{\sharp}(\wIm(\mathcal{X}))$ produces $\mathcal{R} = \tuple{[0.8, 7.4],[3.5, 15.5], [3.4, 6.7],[-11, -0.5], [3.2, 18.2]}$. By applying $g^\sharp$, which collects all the overlapping outputs, we obtain as set of potentially maximal values $\{c_1, c_2, c_3, c_5\}$, telling us that now the output $c_1$ (only potentially safe) is such that $Y_1$ overlaps all the intervals of safe outputs $\{c_2, c_3, c_5\}$, i.e., $g^\sharp(\mathcal{R})=\{c_1, c_2, c_3, c_5\}\not\subseteq\mathcal{S}$ (and indeed $\mathcal{C}_s(g^{\sharp}(\mathcal{R}))=\res$). This answer tells us that, by perturbing the $X_3$ feature, in any case, it is mathematically proven that the $c_4$ output cannot be selected, but maybe $c_1$ can, making the DNN no more provably safe. Nevertheless, if we consider instead abstract safety w.r.t.\ $\mathcal{C}=\{\res, \{c_1,c_2,c_3,c_5\},\mathcal{S}\}$ as in Example~\ref{ex:gerarchia}, then we can prove abstract safety without the necessity of further investigation, since $\mathcal{C}(g^{\sharp}(\mathcal{R}))=\{c_1, c_2,c_3,c_5\} \subsetneq \res$.

To confirm this result, we employ a recent off-the-shelf enumeration strategy \cite{eProve} to identify all regions of the state space where the model satisfies specified requirements. This verification process involves pruning the original input domain to enumerate all the regions where the model satisfies the output abstraction $\mathcal{C}$. Specifically, given the result of $g^\sharp$, we know that in $\mathcal{X}$ the DNN will always select an output in $\{c_1, c_2, c_3, c_5\}$. Hence, the enumeration allows us to understand which part of the input space makes the DNN only \textit{abstract safe} and not provable safe. Although solving this problem in an exact fashion is, in general, computationally prohibitive due to the \#P-hardness of the problem \cite{CountingProVe}, in this toy example, we successfully identify the exact portions of the original input domain $\wIm(\mathcal{X})$  where the model selects $c_1$. Our analysis reported in Fig.~\ref{fig:enum} confirms that only the abstraction $\mathcal{C}$ is respected for the specified perturbed input region. In fact, as we can notice in the figure highlighted in orange, there is a small portion of the perturbed input domain $\tuple{[0,1], [0,1],[0,1]}$ where the model selects the output $c_1$, thus violating $\mathcal{C}_s$.\\ 
Interestingly, by perturbing $X_1$ as done for $X_3$, i.e., by taking $\wIm(\mathcal{X}) \defi\tuple{ [0, 1.8],[0,1],[0.8, 1]}$ the abstract result does not change, i.e., abstract safety w.r.t.\ $\mathcal{C}$ still holds. Hence, we can conclude that a perturbation to either $X_1$ or $X_3$ equally impacts the {\em abstract} safety degree of the DNN.
Finally, if $\wIm(\mathcal{X}) \defi \tuple{[0,1], [0,1.8],[0.8,1]}$ the result is $\mathcal{R}=\tuple{[2.3, 7.4],[7.8, 14.7],[5.4, 6.62],[-11.0, -5.2],[7.8, 17.4]}$, which produces $g^{\sharp}(\mathcal{R}) = \{c_2, c_5\}$, implying that the DNN is abstract safe w.r.t.\ $\mathcal{C}$, but it is also safe (w.r.t.\ $\mathcal{C}_s$). Although this result may not provide interesting safety details, it actually reveals that perturbations on the input feature $X_1$ and $X_3$ have a greater impact on safety degree than perturbations on $X_2$, which preserves \textit{concrete} safety for the same range of perturbation. This simple example shows the expressiveness and potentiality of this new safety formulation, as with a single interval propagation, we can characterize and confine the safety degree of the DNN for a specified safety hierarchy $\mathcal{C}$ designed by the analyst.

\begin{figure}[h!]
  \begin{center}
    \includegraphics[width=0.7\linewidth]{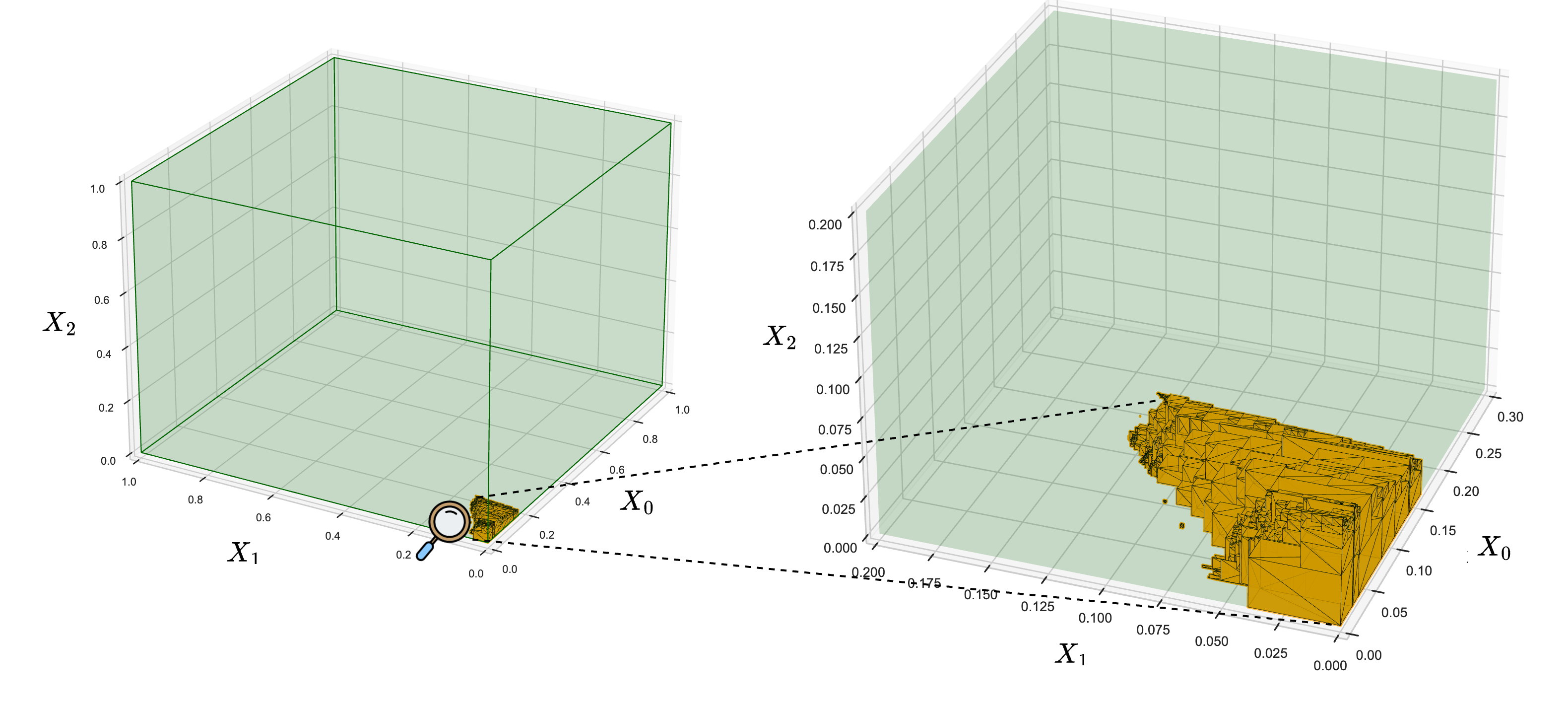} 
      \end{center}
    \caption{Enumeration result of the verification process for $\mathcal{X}$. 
    The green area represents the portion of the input space mapped in the output abstraction $\{c_2,c_3,c_5\}$, while in orange, the one in $\{c_1\}$.}
    \label{fig:enum}
\end{figure}

\section{Empirical Evaluation}\label{sec:empirical_eval}

In this section, we guide the reader in understanding the importance and impact of this novel encoding for the verification problem of deep neural networks. All data are collected on a cluster running Rocky Linux 9.34 equipped with Nvidia RTX A6000 (48 GiB) and a CPU AMD Epyc 7313 (16 cores). The code used to collect the results, and detailed instructions to reproduce our experiments are available in the supplementary material.

We first consider the safety verification of a realistic deep reinforcement learning navigation task, namely \textit{Habitat-Lab} \cite{habitat3,habitat2}. In this experiment, we employ the tool of \cite{eProve}, which allows us to select more realistic adversarial attacks (e.g., light attacks and sensor ruptures) rather than standard $\ell_\infty$-ball perturbation supported in state-of-the-art FV tools like \textit{$\alpha,\beta$-CROWN} \cite{crown,acrown,bcrown}. In detail, we show how our novel formulation, with even relatively simple hierarchies, enables the ranking of adversarial inputs based on their impact, providing valuable insights into model robustness that traditional binary verification cannot capture.
To further assess the impact of our proposal, we consider the robustness verification of well-known classification tasks on state-of-the-art benchmarks such as CIFAR10 \cite{cifar10}, 
employed in VNN-COMP \cite{VNN-comp2023}. In this context, we employ \textit{$\alpha,\beta$-CROWN} \cite{crown,acrown,bcrown}, and we study the impact of different perturbation levels on standard discrete classes and the different output abstractions defined for each domain tested. Notably, the abstract safety specifications proposed in this work can be easily encoded using the standard VNN-LIB format and thus can be verified by any other existing FV tools. 
Our empirical evaluation demonstrates that our novel formulation is already applicable in real-world scenarios, as evidenced by its successful use not only on standard VNN-COMP models but also on more complex architectures, such as a long short-term memory (LSTM) network with a ResNet18 visual backbone deployed in the realistic Habitat 3.0 environment.\\
\paragraph{Habitat-Lab Experiment.} 
Habitat \cite{habitat2,habitat3} is a high-performance simulator designed for training and evaluating deep reinforcement learning models, particularly in 3D environments for embodied AI tasks. In the \textit{Social Navigation} task \cite{habitat2,habitat3} employed in our evaluation, we have a robotic agent that has to follow a humanoid without any collision. This evaluation considers the publicly released trained agent from the Habitat-Lab GitHub repository (\url{https://github.com/facebookresearch/habitat-lab}). 

\begin{figure}[h!]
\centering
\includegraphics[width=.33\textwidth]{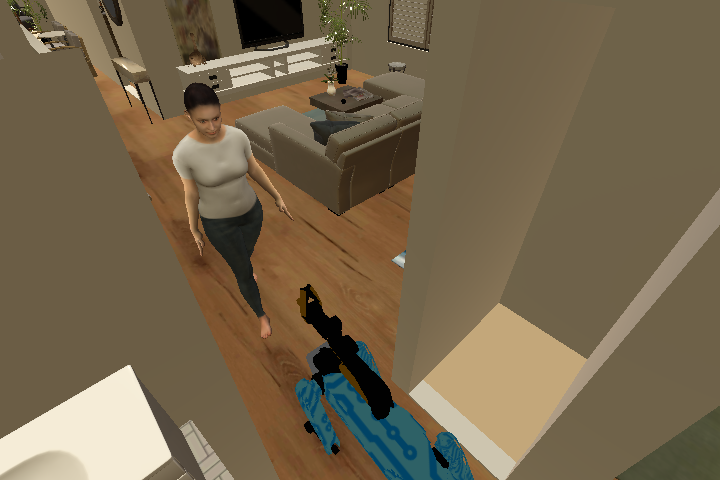}\hfill
\includegraphics[width=.33\textwidth]{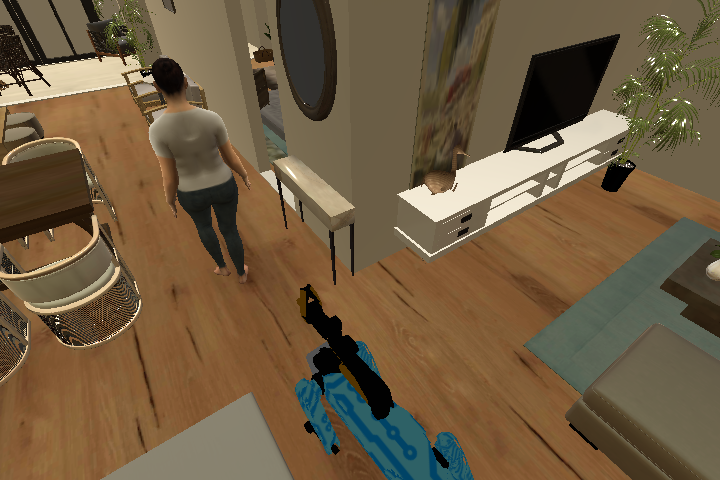}\hfill
\includegraphics[width=.33\textwidth]{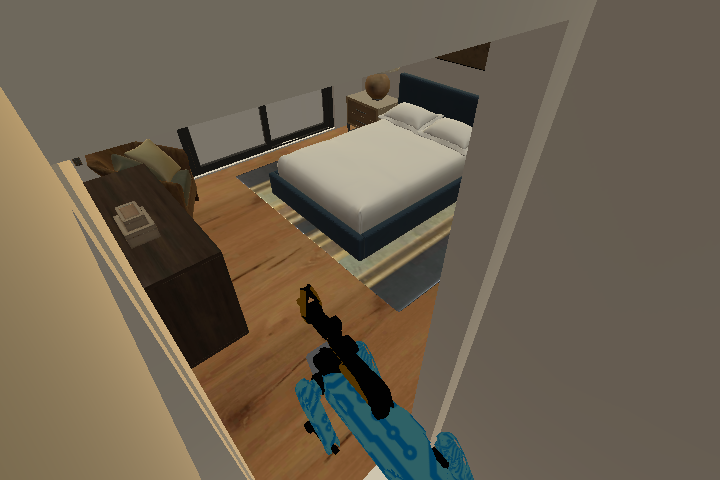}

\caption{Unsafe scenarios from the Habitat Lab experiments: On the left, the humanoid approaches the robot, which should move backward to avoid collision. In the center, the humanoid moves away, and the robot should follow while avoiding obstacles. On the right, the humanoid approaches from behind, and the robot should turn to avoid an unexpected collision while searching for the humanoid.}
\label{fig:habitat_props}
\end{figure}
In detail, we identify three unsafe situations (the first two depicted in Fig.\ref{fig:habitat_props}) from a dataset of trajectories
observed during evaluation, and we defined a set of provably safe actions and potentially safe actions (i.e., tolerable but less preferred than the safe ones), setting different thresholds for the linear and angular velocities that the agent should respect. Hence, the safe output abstraction $\mathcal{C}_1$ here represents more stringent velocity thresholds, while the \textit{abstract safe} output abstractions $\mathcal{C}_2$ consider more relaxed ones. For instance, if the agent in the first unsafe scenario only slowly moves backward and turns around, it can still be tolerable (i.e., abstract safe) but is less preferable than moving backward with a higher linear velocity and no angular velocity, i.e., without turning around. If the agent does not respect any of the output abstractions, it is considered unsafe. For the sake of clarity, in Fig. \ref{fig:hierarchy_habitat}, we only report the abstraction for the first unsafe situation, but a similar strategy is applied to all the other situations considered using different thresholds as described in Tab.~\ref{tab:concrete_classes_habitat} of Appendix \ref{appendix:full_empirical}.

\begin{figure}[h!]
    \centering
    \includegraphics[width=0.7\linewidth]{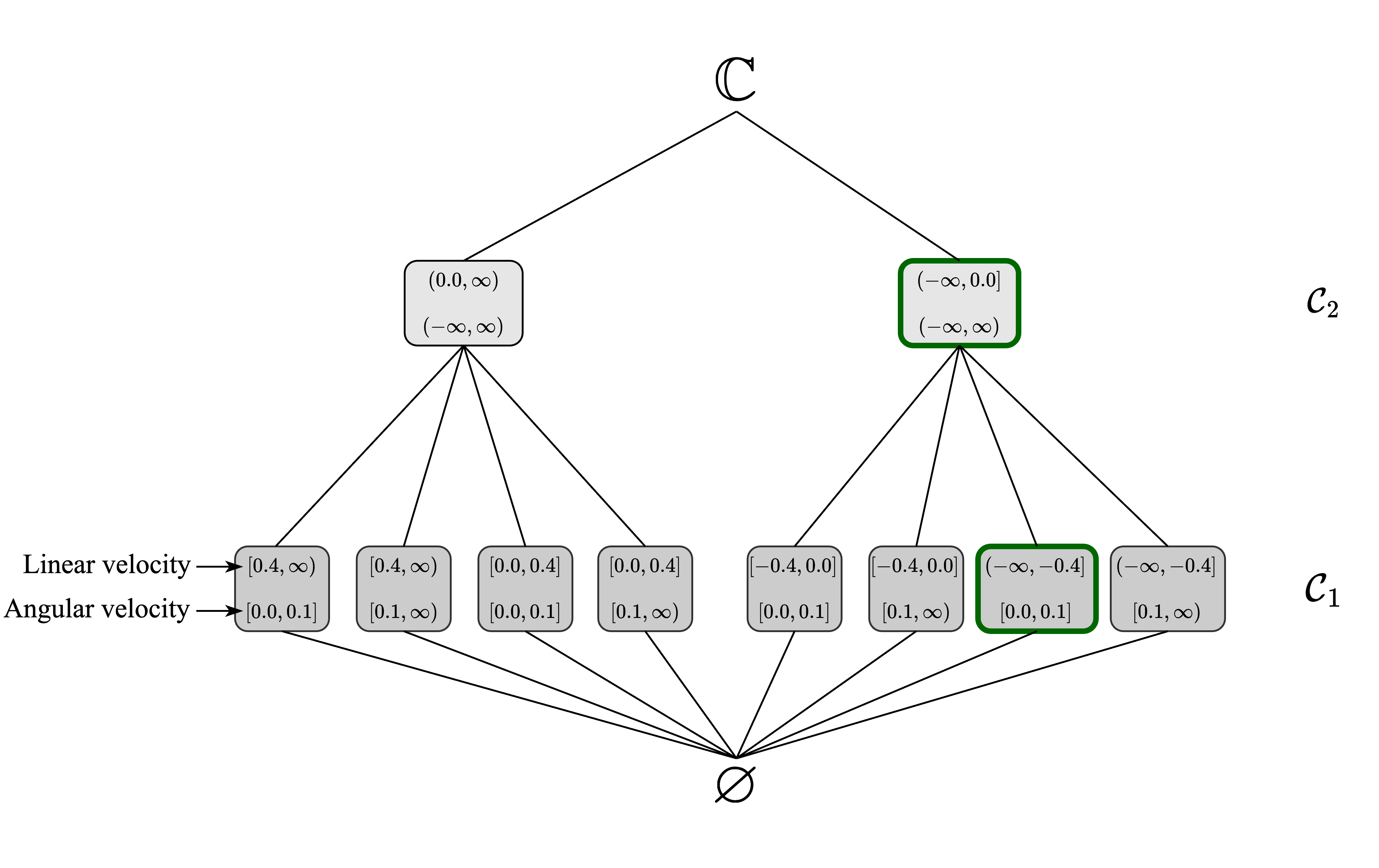}
    \caption{Abstraction hierarchy used in the Habitat Lab experiment for \textit{Situation 1}. For $\mathcal{C}_1$ and $\mathcal{C}_2$ we highlighted in dark green the \textit{safe} and \textit{abstract safe} desired outputs.}
    \label{fig:hierarchy_habitat}
\end{figure}

Our empirical evaluation is based on three different types of adversarial attacks (reported Fig. \ref{fig:habitat_adv}).

 In detail, we consider:
 
\begin{itemize}
    \item \textbf{Light Attacks}: In this attack, inspired by \cite{hsiao2024natural}, a random region of the original image is selected, and a natural light effect is simulated by proportionally increasing the pixel values within the patch, effectively overlaying it up to a complete white patch. In practice, this attack is performed by selecting a patch $ P $ in the image and perturbing its pixels according to the formula:  
    \[
    p_i' = \min\{p_i + \epsilon \cdot (1.0 - p_i), 1.0\}, \quad \forall p_i \in P.
    \]
    where $p_i$ represents the original pixel value, and $p_i'$ is the perturbed value. 
    \item \textbf{$\epsilon$-ball Perturbations}: adopted in all of our experiments, which define $\ell_\infty$ balls with radius $\epsilon$ to simulate Gaussian noise applied over all input features.
    \item \textbf{Sensor Rupture}: This attack is modeled by selecting a patch $P$ in the image and setting each pixel within the patch to black (i.e., a value of 0) with a probability $\epsilon$. This corresponds to restricting the affected pixels to the closed interval $[0, 0]$ during verification. Mathematically, we define this attack as:  
    \[
    p_i' = 
    \begin{cases} 
    0 & \text{with probability } \epsilon, \\
    p_i & \text{with probability } (1 - \epsilon),
    \end{cases}  
    \quad \forall p_i \in P,
    \]  
\end{itemize}

\begin{figure}[h!]
    \centering
    \includegraphics[width=\linewidth]{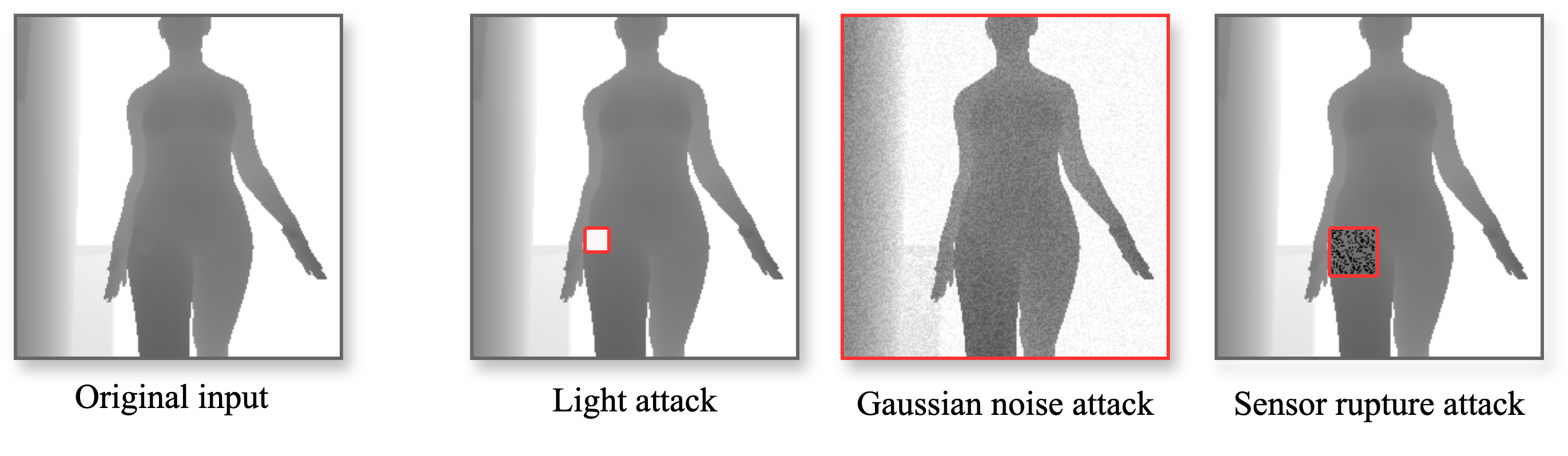}
    \vspace{-0.5cm}
    \caption{A visual representation of each type of attack used in our empirical evaluation applied on the same arm depth camera input from the habitat-lab experiment. On the left is the original input before the adversarial attack. On the right, the different attacks are highlighted in red: the "light patch" attack with patch size $16\times 16$ and $\epsilon$ set to 1.0, the $\ell_\infty$ $\epsilon$-ball attack with $\epsilon$ set to 0.08 applied to all the input features, and finally, the "sensor rupture" attack with patch size $32\times 32$ and $\epsilon$ set to 0.3.}
    \label{fig:habitat_adv}
\end{figure}


Results in Tab.~\ref{tab:result_habitat} demonstrate the varying impact of different attacks across situations. In \textit{Situation 1}, ``light patch" and $\ell_\infty$ attacks exhibit similar effects, maintaining abstract safe behavior under relaxed thresholds for linear and angular velocities, whereas ``sensor rupture" attacks result in unsafe behavior by exceeding the linear velocity threshold, risking a collision. For \textit{Situation 2}, ``light patch" attacks lead to safe behavior under stringent thresholds, while $\ell_\infty$ and ``sensor rupture" attacks produce only abstract safe outcomes by meeting relaxed thresholds. In \textit{Situation 3}, all attacks result in safe behavior, respecting the imposed thresholds. We highlight that our novel problem formulation allows for a more nuanced understanding of the impact of different attacks. For example, in \textit{Situation 1}, standard verification would classify all attacks with the same impact (all producing unsafe results) due to the lack of concrete, safe outcomes. In contrast, our approach reveals that ``light patch" and $\ell_\infty$ perturbations have a lesser impact than ``sensor rupture". This experiment demonstrates how the \textsc{ADV} problem can be leveraged in realistic safety-critical tasks to develop a pipeline for ranking adversarial attacks based on their impact on the output abstractions associated with the unsafe situation verified.

\begin{table}[h!]
    \tiny
    \centering
    \caption{Empirical results on Habitat Lab benchmark.}
    \label{tab:result_habitat}
   \begin{tabular}{ccccccc} 
     \Xhline{3\arrayrulewidth}
      & \vspace{-2mm}& & & & & \\ 
     \textbf{Situation} & \textbf{Perturb.} & \textbf{Patch size} & $\epsilon$  & \textbf{Safe} & \textbf{Abstract safe} & \textbf{Unsafe} \\ [0.5ex] 
     \Xhline{3\arrayrulewidth}
      1 & Light patch & $16\times16$ & $1.0$ &  & \ding{51} &  \\ 
     
      2 & Light patch & $16\times16$ & $1.0$ &  \ding{51} &  &  \\
    
      3 & Light patch & $16\times16$ & $1.0$ &  \ding{51} &  &  \\
     \Xhline{3\arrayrulewidth}
       1 & $\ell_{\infty}$ & - & $0.08$  &  & \ding{51} &  \\ 
     
      2 & $\ell_{\infty}$ & - & $0.08$ &  & \ding{51} &  \\
    
      3 & $\ell_{\infty}$ & - & $0.08$ &  \ding{51} &  &  \\
     \Xhline{3\arrayrulewidth}
     1 & Sensor rupture & $32\times32$ & $0.3$ &  &  & \ding{51} \\ 
     
      2 & Sensor rupture & $32\times32$ & $0.3$ &  & \ding{51} & \\
    
      3 & Sensor rupture & $32\times32$ & $0.3$ &  \ding{51} &  &  \\
    \Xhline{3\arrayrulewidth}
    \end{tabular}
\end{table}

\paragraph{CIFAR10 Experiment.} In this experiment, we test the effectiveness of the \textsc{ADV} to understand the impact of different $\ell_\infty$ $\epsilon$-ball perturbations on different output abstractions. Specifically, we define three distinct degrees of output abstractions to evaluate robustness to perturbations, which we report in Fig.~\ref{fig:hierarchy_CIFAR}. The first output abstraction corresponds to the precise classification modeled by $\mathcal{C}_1\defi\{\res,\{\mbox{Airplane}\}, \{\mbox{Car}\}, \{\mbox{Bird}\}, \{\mbox{Cat}\},\{\mbox{Deer}\}, \{\mbox{Dog}\},$\\$ \{\mbox{Frog}\}, \{\mbox{Horse}\}, \{\mbox{Ship}\}, \{\mbox{Truck}\},\varnothing\}$, where we do not allow any misclassification. Then, we start by aggregating some elements to create a set of categories, modeling some degree of acceptable error in the classification. With this purpose, we define $\mathcal{C}_2 \defi\{\res,\hfill\mbox{Other Vehicles},\hfill \mbox{Terrain Vehicles},\hfill \mbox{Mammiferous},\ $\\$  \mbox{non-Mammiferous},\{\mbox{Horse}\},\varnothing\}$ where the order relation depicted in Fig.~\ref{fig:hierarchy_CIFAR} determines the set composition of the given categories.

\begin{figure}[h!]
    \vspace{-3mm}
    \centering
    \includegraphics[width=0.65\linewidth]{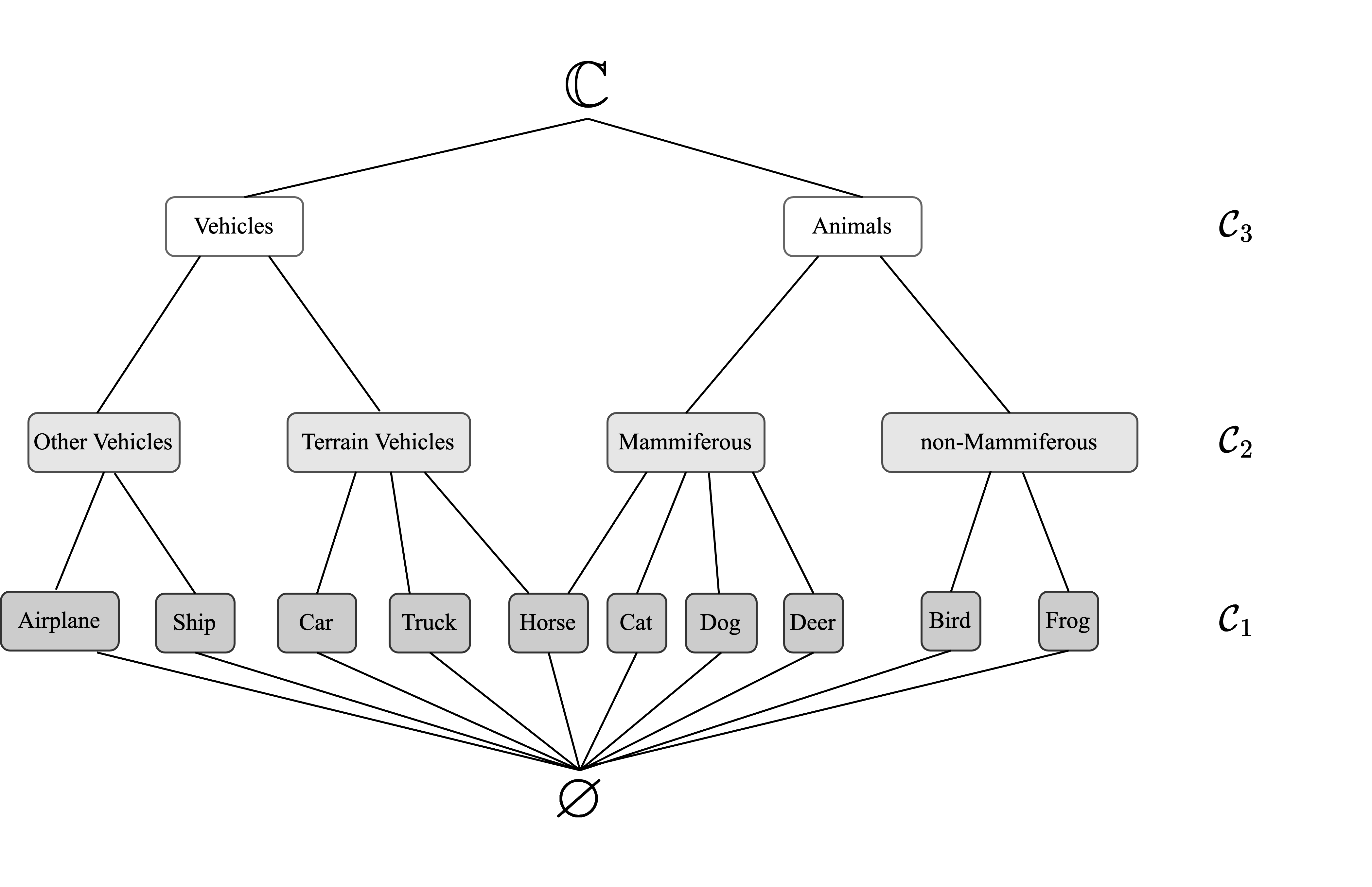}
     \vspace{-3mm}
    \caption{Abstraction hierarchy used in the CIFAR10 experiment.}
    \label{fig:hierarchy_CIFAR}
\end{figure}

Note that $\mathcal{C}_2$ ignores all the precise classes in $\mathcal{C}_1$ except \textit{Horse} being in both \textit{Terrain Vehicles} and \textit{Mammiferous}. This abstraction allows misclassifications within the same category but not between different categories. For example, misclassifying a \textit{cat} as a \textit{dog} is acceptable since both belong to the \textit{Mammiferous} category, but misclassifying a \textit{cat} as a \textit{car}, which belongs to the \textit{Terrain Vehicles} category, is not. Finally, we have the last level of aggregation, where we distinguish between \textit{animals} to \textit{vehicles}. Formally, we are considering ${\mathcal{C}_3}\defi \{\res, \mbox{Vehicle}, \mbox{Animal}, \{\mbox{Horse}\},\varnothing\}$, where from Fig.~\ref{fig:hierarchy_CIFAR} we can derive the definition of the class abstract categories. For each instance, we set a timeout verification threshold of 100 seconds.

We present the empirical results of the first experiment in Tab.~\ref{tab:result_cifar}. When verifying $\mathcal{C}_1$ (i.e., with concrete classes), increasing the \(\epsilon\) perturbation results in fewer safe instances and generally more timeouts. However, by introducing output abstraction with $\mathcal{C}_2$ or $\mathcal{C}_3$, we observe a reduction in the number of timeouts and a better reflection of the DNN's robustness. For example, with \(\epsilon = 1/255\), while standard formal verification identifies four unsafe misclassifications, the DNN reliably distinguishes between \textit{animals} and \textit{vehicles} and nearly always differentiates between the finer subcategories of $\mathcal{C}_1$, highlighting its actual robustness across these classifications. These results also support our hypothesis regarding the practical scalability of abstract verification, where providing multiple output levels can reduce the number of \textit{unknown} outcomes, thereby lowering the likelihood of incurring timeouts. This is highlighted, for instance, in Tab. \ref{tab:result_cifar}, where under an increasing perturbation of $\epsilon = 2/255$, we observe fewer timeout instances as the level of abstraction increases.
\begin{table}[t]
    \scriptsize
    \centering
    \caption{Empirical results on CIFAR10 benchmark.}
    \label{tab:result_cifar}
   \begin{tabular}{c c c c c c} 
     \Xhline{3\arrayrulewidth}
      \textbf{$\epsilon$ Perturb.} & \textbf{Abstraction} & \textbf{Safe} & \textbf{Abstract Safe} & \textbf{Unsafe} & \textbf{Timeout} \\ [0.5ex] 
     \Xhline{3\arrayrulewidth}
     1/255 & $\mathcal{C}_1$ & 46 & - & 4 & 0 \\ 
     
     1/255 & $\mathcal{C}_2$ & 46 & 2 & 2 & 0  \\
    
      1/255 & $\mathcal{C}_3$ & 46 & 4 & 0 & 0\\
     \Xhline{3\arrayrulewidth}
     2/255 & $\mathcal{C}_1$ & 27 & - & 11 & 12 \\
     
     2/255 & $\mathcal{C}_2$ & 27 & 9 & 6 & 8 \\
      
     2/255 & $\mathcal{C}_3$ & 27 & 18 & 1 & 4 \\
     \Xhline{3\arrayrulewidth}
    \end{tabular}

\end{table}
\section{Discussion}

We introduced the \textsc{Abstract DNN-Verification}, extending the standard formal verification of neural networks to include a hierarchical structure of safety and robustness properties. By allowing multiple levels of output abstraction, our approach addresses limitations in traditional verification methods, which rely on binary classifications of safe or unsafe outputs. This enhanced framework enables a more expressive analysis of deep neural networks, especially in complex scenarios where traditional safety properties are hard to write or to verify. We also establish the relationship and advancements with the concept of \textit{weakened} robustness introduced in \cite{giacobazzi2024adversities}. Importantly, we show how this formulation can be adapted to realistic benchmarks and complex tasks to rank adversarial attacks based on the impact across different output abstraction levels and get a deeper insight into the safety degree of neural networks. Future work will explore how incorporating contextual information, such as task-specific constraints, environmental cues, or user-defined priorities, can enrich the hierarchical structure of output abstractions. This could lead to adaptive verification strategies that dynamically adjust abstraction levels based on operational context, further improving interpretability and applicability in real-world scenarios. Additionally, investigating automated techniques to learn abstraction hierarchies from data could reduce manual effort and open new avenues for scalable safety analysis in deep learning systems. 

\section*{Acknowledgments}
This work has been supported by PNRR MUR project PE0000013-FAIR. The authors thank Gabriele Roncolato for his support during the empirical evaluation, running experiments, and analyzing the results.

\bibliographystyle{unsrt}  
\bibliography{references}  

\clearpage
\section*{Appendix}

\section{Computing $g^\sharp$} \label{apx:example_g_sharp}
In this section, we show a procedure for collecting the set of potential answers of an abstracted DNN semantics. Formally, let us consider a DNN with \textit{abstract semantics} $f^{\sharp}: \wp(\mathbb{R})^m\to \wp(\mathbb{R})^n$.
Let $g^{\sharp}: \wp(\mathbb{R})^n \to \wp(\res)$ with $\res \defi [1,n] \times \wp(\mathbb{R})$.
As discussed in Sect.~\ref{sec:preliminaries}, 
it is often difficult for $g^\#$ to return a set with a single index-interval tuple due to the overlap of reachable sets after the first propagation of $\mathcal{X}$ through $f^\#$, and further investigations in terms of either input space or unstable node splits are necessary. 

Nonetheless, our idea is to exploit a procedure that computes a set of \textit{potential maximum results} (instead of single ones like in the concrete semantic \cite{LiuSurvey}) and then check if the set respects at least one level of the pre-defined safety hierarchy. The approach outlined in Alg.~\ref{alg1} addresses this goal by avoiding the computation of a single reachable set and, instead, identifying the maximal set of overlapping output intervals where the output value can be selected (i.e., intervals that may contain the greatest value). This is achieved by ignoring any interval where at least one other interval has a lower bound greater than the upper bound of the considered interval. 

To provide a clarification example to the reader, suppose to have a case where the propagation of a set of input intervals $\mathcal{X}$ through the DNN $f^{\sharp}$, using interval bound propagation (IBP) \cite{lomuscio2017approach}, produces $\mathcal{R} = \tuple{[-3.6,3.2], [3.5,15.5],[3.4,6.7],[-11,-0.5], [3.2,18.2]}$. Then, if we compute the partition induced simply by the overlapping, we would have the whole set $\{c_1,c_2,c_3,c_4,c_5\}$ since there are no pairs of results with empty intersection, but $\max{(Y_4)}\leq\min{(Y_2)}$\footnote{In this context since we are considering intervals, max and min are used to collect lower and upper bounds of the intervals.}, hence $y_4$ can never be chosen, analogously $\max{(Y_1)}\leq\min{(Y_2)}$, hence also $y_2$ cannot be chosen. This means the results that can win are necessarily in the set $\{c_2,c_3,c_5\}$.


\begin{algorithm}[h!]
\caption{Computing $g^\sharp$}\label{alg1}
\begin{algorithmic}[1]
\small
\STATE \textbf{Input:} $\mathcal{R}$.
\STATE \textbf{Output: } $\wp(\res)$.
\vspace{0.2cm}
\STATE $R = \wp(\res)$
\FOR{$i\in[1,n]$}
    \FOR{$j\in[1,n],j\neq i$}
        \IF{$\max{(Y_i)}\leq\min{(Y_j)}$}
            \STATE $R=R\smallsetminus \langle i,Y_i \rangle$
            \STATE \textbf{exitfor}
        \ENDIF
    \ENDFOR
    
\ENDFOR
\STATE \textbf{return} $R$
\end{algorithmic}
\end{algorithm}

\section{\textit{Coherence} Does Not Imply \textit{Abstract Coherence}: an Example}\label{apx:example_proof}

Our proof of Theorem \ref{thm:abs_coherence} demonstrates that verifying \textit{Coherence} on specific input perturbations and concrete semantics does not guarantee the DNN's overall abstract safety level. This issue stems from the pointwise verification \cite{LiuSurvey} nature of \textit{Coherence}, which may overlook critical cases where the DNN violates abstract output properties. In contrast, \textit{Abstract Coherence}, by exploiting an abstracted input perturbation and execution framework, provides a more robust and reliable safety guarantee. To see this in practice, let us consider the following example. 

\begin{figure}[h!]
	\begin{center}
		\scalebox{0.8} {
			\def\layersep{3cm}
			\begin{tikzpicture}[shorten >=1pt,->,draw=black!50, node
				distance=\layersep,font=\footnotesize]
				
				\node[input neuron] (I-1) at (0,-1) {$x_1$};
				\node[input neuron] (I-2) at (0,-2.5) {$x_2$};
        
				\node[left=-0.05cm of I-1] (b1) {$X_1 = (0, 1]$};
				\node[left=-0.05cm of I-2] (b2) {$X_2=(0, 1]$};

				\node[hidden neuron] (H-1) at (1.2*\layersep,-1) {};
				\node[hidden neuron] (H-2) at (1.2*\layersep,-2.5) {};

				\node[hidden neuron] (H-3) at (1.8*\layersep,-1) {};
				\node[hidden neuron] (H-4) at (1.8*\layersep,-2.5) {};

			    \node[output neuron] at (3*\layersep, 0) (O-0) {};
                \node[output neuron] at (3*\layersep, -1.5) (O-1) {};
                \node[output neuron] at (3*\layersep, -3) (O-2) {};

                \node[below=0.05cm of O-0]{$y_1$};
                \node[below=0.05cm of O-1]{$y_2$};
                \node[below=0.05cm of O-2]{$y_3$};

                \node[right=0.05cm of O-0] (b0) {};
                \node[right=0.05cm of O-1] (b1) {};
                \node[right=0.05cm of O-2] (b2) {};
               
				
				\draw[nnedge] (I-1) --node[above, pos=0.25] {$1$} (H-1);
				\draw[nnedge] (I-1) --node[above, pos=0.25] {$2$} (H-2);
                
				\draw[nnedge] (I-2) --node[above, pos=0.1] {$2$} (H-1);
                \draw[nnedge] (I-2) --node[above, pos=0.25] {$-3$} (H-2);

				\draw[nnedge] (H-1) --node[above] {ReLU} (H-3);
				\draw[nnedge] (H-2) --node[below] {ReLU} (H-4);

                \draw[nnedge] (H-3) --node[above, pos=0.7] {$0.8$} (O-0);
				\draw[nnedge] (H-3) --node[above, pos=0.75] {$-0.1$} (O-1);
                \draw[nnedge] (H-3) --node[above, pos=0.8] {$0.5$} (O-2);

                \draw[nnedge] (H-4) --node[above, pos=0.65] {$0.3$} (O-0);
				\draw[nnedge] (H-4) --node[above, pos=0.7] {$-1$} (O-1);
                \draw[nnedge] (H-4) --node[above, pos=0.7] {$0.6$} (O-2);


				\node[below=0.05cm of H-1] (b1) {};
				\node[below=0.05cm of H-2] (b2) {};

				\node[below=0.05cm of H-3] (b1) {};
				\node[below=0.05cm of H-4] (b2) {};


		
				\node[annot,above of=H-1, node distance=0.8cm] (hl1) {Weighted
					sum};
				\node[annot,above of=H-3, node distance=0.8cm] (hl2) {Activated Layer };
				\node[annot,above of=I-1, node distance=0.8cm] {Input };
				\node[annot,above of=O-0, node distance=0.8cm] {Output };
			\end{tikzpicture}
		} 
		
	\end{center}
    \caption{Toy DNN for this example. $\res\defi\{c_i\defi\tuple{i,Y_i}~|~i\in[1,3]\}$.}
    \label{fig:example_proof}
\end{figure}

Consider the DNN depicted in Fig. \ref{fig:example_proof}
where we have $x_1,x_2 \in [0,1]$, namely $\mathcal{X} \defi\langle [0,1], [0,1] \rangle$, and $\mathcal{S} = \{c_1\} \subsetneq \res$. We then define $\mathcal{C}=\{\res, \{c_1, c_2\}, \mathcal{S}, \varnothing\}$, meaning that we tolerate a classification that consider both $c_1$ and $c_2$ but $c_3$ is not accepted as considered unsafe. We start by verifying if \textit{Coherence} is respected. 

Consider an input $\mathbf{x} = \langle 0.5, 0.5 \rangle \in \mathcal{X}$. Let us define $\Im_{\varepsilon}(\mathbf{x})$ as  $\ell_\infty$-ball perturbation of radius $\epsilon$ around $\mathbf{x}$; formally 
$\Im_{\varepsilon}(\mathbf{x}) = \{ \mathbf{x}' \in \mathbb{R}^m \, | \, \|\mathbf{x}' - \mathbf{x}\|_\infty \leq \varepsilon \}$.


For this example, let us consider an $\varepsilon$ radius for the perturbation equals $0.5$ around $\mathbf{x}$ for the \textit{Coherence} test. We start by noticing the first limitation of \textit{Coherence} related to the necessity of executing the test on the DNN's concrete semantics. Specifically, even if we consider, for instance, a subset of the perturbation $\Im_{\mbox{\tiny $0.5$}}(\mathbf{x})$ that considers only the inputs of the same values, namely $\Im_{\mbox{\tiny $0.5$}}(\mathbf{x}) = \sset{\tuple{x,x}}{x\in(0,1]} \subseteq \mathbb{R}^2$, we have a infinity number of couples to test, which prevents the possibility to state whether \textit{Coherence} holds or not. To address this issue, one possible solution is to assume some discretization of the input space (e.g., to the maximum resolution allowed by the machine precision), such that $\Im_{\mbox{\tiny $0.5$}}(\mathbf{x})$ becomes a finite set, that can be easily tested even in the concrete semantics. 

Nonetheless, in this specific example for the \textit{Coherence} test, thanks to the simple nature of the DNN, we can employ the symbolic propagation \cite{reluval} to compute linear equations of each layer w.r.t the previous one. In detail can be easily verified by direct inspection of the possible output values in the DNN of Fig. \ref{fig:SIP_proof} that even if $\Im_{\mbox{\tiny $0.5$}}(\mathbf{x}) = \sset{\tuple{x,x}}{x\in(0,1]} \subseteq \mathbb{R}^2$ has infinite cardinality, for all $x_1 = x_2 \in (0,1]$ the final result chosen by the DNN will always be $y_1$, as it presents the maximum value w.r.t the other ones.

\begin{figure}[h!]
	\begin{center}
		\scalebox{0.8} {
			\def\layersep{3cm}
			\begin{tikzpicture}[shorten >=1pt,->,draw=black!50, node
				distance=\layersep,font=\footnotesize]
				
				\node[input neuron] (I-1) at (0,-1) {$x_1$};
				\node[input neuron] (I-2) at (0,-2.5) {$x_2$};
        
				\node[left=-0.05cm of I-1] (b1) {$X_1 = (0, 1]$};
				\node[left=-0.05cm of I-2] (b2) {$X_2=(0, 1]$};

				\node[hidden neuron] (H-1) at (1.2*\layersep,-1) {};
				\node[hidden neuron] (H-2) at (1.2*\layersep,-2.5) {};

				\node[hidden neuron] (H-3) at (1.8*\layersep,-1) {};
				\node[hidden neuron] (H-4) at (1.8*\layersep,-2.5) {};

			    \node[output neuron] at (3*\layersep, 0) (O-0) {};
                \node[output neuron] at (3*\layersep, -1.5) (O-1) {};
                \node[output neuron] at (3*\layersep, -3) (O-2) {};

                \node[below=0.05cm of O-0]{$y_1$};
                \node[below=0.05cm of O-1]{$y_2$};
                \node[below=0.05cm of O-2]{$y_3$};

                \node[right=0.05cm of O-0] (b0) {$0.8x_1+1.6x_2$};
                \node[right=0.05cm of O-1] (b1) {$-0.1x_1-0.2x_2$};
                \node[right=0.05cm of O-2] (b2) {$0.5x_1+x_2$};
               
				
				\draw[nnedge] (I-1) --node[above, pos=0.25] {$1$} (H-1);
				\draw[nnedge] (I-1) --node[above, pos=0.25] {$2$} (H-2);
                
				\draw[nnedge] (I-2) --node[above, pos=0.1] {$2$} (H-1);
                \draw[nnedge] (I-2) --node[above, pos=0.25] {$-3$} (H-2);

				\draw[nnedge] (H-1) --node[above] {ReLU} (H-3);
				\draw[nnedge] (H-2) --node[below] {ReLU} (H-4);

                \draw[nnedge] (H-3) --node[above, pos=0.7] {$0.8$} (O-0);
				\draw[nnedge] (H-3) --node[above, pos=0.75] {$-0.1$} (O-1);
                \draw[nnedge] (H-3) --node[above, pos=0.8] {$0.5$} (O-2);

                \draw[nnedge] (H-4) --node[above, pos=0.65] {$0.3$} (O-0);
				\draw[nnedge] (H-4) --node[above, pos=0.7] {$-1$} (O-1);
                \draw[nnedge] (H-4) --node[above, pos=0.7] {$0.6$} (O-2);


				\node[below=0.05cm of H-1] (b1) {$x_1+2x_2$};
				\node[below=0.05cm of H-2] (b2) {$2x_1-3x_2$};

				\node[below=0.05cm of H-3] (b1) {$x_1+2x_2$};
				\node[below=0.05cm of H-4] (b2) {$0$};


		
				\node[annot,above of=H-1, node distance=0.8cm] (hl1) {Weighted
					sum};
				\node[annot,above of=H-3, node distance=0.8cm] (hl2) {Activated Layer };
				\node[annot,above of=I-1, node distance=0.8cm] {Input };
				\node[annot,above of=O-0, node distance=0.8cm] {Output };
			\end{tikzpicture}
		} 
		
	\end{center}
    \caption{Symbolic propagation of $\tuple{x_1,x_2}$ through the DNN considered in this example.}
    \label{fig:SIP_proof}
\end{figure}
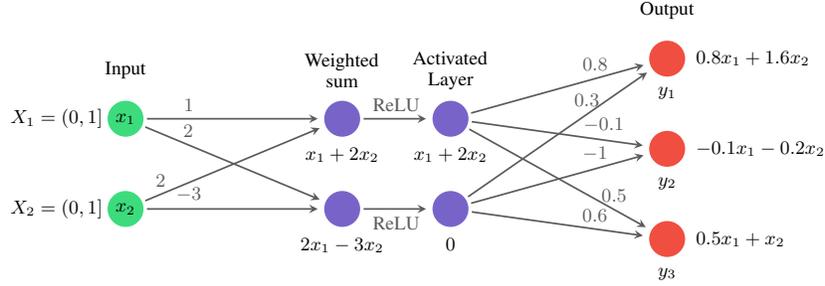

Hence, from the fact that $\mathcal{C}\comp(g \circ f)\comp\Im_{\mbox{\tiny $0.5$}}(\mathbf{x}) = \{c_1\} = \mathcal{S} \subsetneq \res$, we conclude \textit{Coherence} is satisfied. 

If we now consider the widening input perturbation function $\wIm_{\mbox{\tiny $0.5$}}(\mathbf{x}) \defi \mathcal{X} = \tuple{[0,1],[0,1]}$ (considering, for a fair comparison, the same range of values as $\Im_{\mbox{\tiny $0.5$}}$, but now all the possible permutations in $[0,1]\times[0,1]$ and not only same couple of values $x_1=x_2 \in (0,1]$), we observe that \textit{Abstract Coherence} is not guaranteed. Specifically, our novel formulation of perturbation function $\wIm$ and abstract DNN semantics allows us to propagate the entire abstract input domain $\mathcal{X}$ through $f^\sharp$ (using, for instance, IBP as in the DNN of Fig. \ref{fig:IBP_example_proof}), obtaining the reachable set $\mathcal{R} = \langle [0, 3.0], [-2.3, 0], [0, 2.7] \rangle$. Applying $g^\sharp$ to $\mathcal{R}$ yields the set $\{c_1, c_3\}$. Consequently, the result of $\mathcal{C}(g^\sharp(\mathcal{R})) = \res$, violates the \textit{Abstract Coherence} requirement.

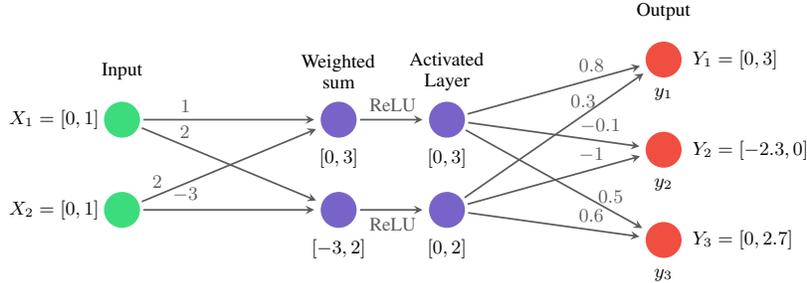
\begin{figure}[h!]
	\begin{center}
		\scalebox{0.8} {
			\def\layersep{3cm}
			\begin{tikzpicture}[shorten >=1pt,->,draw=black!50, node
				distance=\layersep,font=\footnotesize]
				
				\node[input neuron] (I-1) at (0,-1) {};
				\node[input neuron] (I-2) at (0,-2.5) {};
        
				\node[left=-0.05cm of I-1] (b1) {$X_1 = [0, 1]$};
				\node[left=-0.05cm of I-2] (b2) {$X_2=[0, 1]$};

				\node[hidden neuron] (H-1) at (1.2*\layersep,-1) {};
				\node[hidden neuron] (H-2) at (1.2*\layersep,-2.5) {};

				\node[hidden neuron] (H-3) at (1.8*\layersep,-1) {};
				\node[hidden neuron] (H-4) at (1.8*\layersep,-2.5) {};

			    \node[output neuron] at (3*\layersep, 0) (O-0) {};
                \node[output neuron] at (3*\layersep, -1.5) (O-1) {};
                \node[output neuron] at (3*\layersep, -3) (O-2) {};

                \node[below=0.05cm of O-0]{$y_1$};
                \node[below=0.05cm of O-1]{$y_2$};
                \node[below=0.05cm of O-2]{$y_3$};

                \node[right=0.05cm of O-0] (b0) {$Y_1=[0, 3]$};
                \node[right=0.05cm of O-1] (b1) {$Y_2=[-2.3, 0]$};
                \node[right=0.05cm of O-2] (b2) {$Y_3=[0, 2.7]$};
               
				
				\draw[nnedge] (I-1) --node[above, pos=0.25] {$1$} (H-1);
				\draw[nnedge] (I-1) --node[above, pos=0.25] {$2$} (H-2);
                
				\draw[nnedge] (I-2) --node[above, pos=0.1] {$2$} (H-1);
                \draw[nnedge] (I-2) --node[above, pos=0.25] {$-3$} (H-2);

				\draw[nnedge] (H-1) --node[above] {ReLU} (H-3);
				\draw[nnedge] (H-2) --node[below] {ReLU} (H-4);

                \draw[nnedge] (H-3) --node[above, pos=0.7] {$0.8$} (O-0);
				\draw[nnedge] (H-3) --node[above, pos=0.75] {$-0.1$} (O-1);
                \draw[nnedge] (H-3) --node[above, pos=0.8] {$0.5$} (O-2);

                \draw[nnedge] (H-4) --node[above, pos=0.65] {$0.3$} (O-0);
				\draw[nnedge] (H-4) --node[above, pos=0.7] {$-1$} (O-1);
                \draw[nnedge] (H-4) --node[above, pos=0.7] {$0.6$} (O-2);


				\node[below=0.05cm of H-1] (b1) {$[0, 3]$};
				\node[below=0.05cm of H-2] (b2) {$[-3, 2]$};

				\node[below=0.05cm of H-3] (b1) {$[0, 3]$};
				\node[below=0.05cm of H-4] (b2) {$[0, 2]$};


		
				\node[annot,above of=H-1, node distance=0.8cm] (hl1) {Weighted
					sum};
				\node[annot,above of=H-3, node distance=0.8cm] (hl2) {Activated Layer };
				\node[annot,above of=I-1, node distance=0.8cm] {Input };
				\node[annot,above of=O-0, node distance=0.8cm] {Output };
			\end{tikzpicture}
		} 
		
	\end{center}
    \caption{Interval bound propagation of $\mathcal{X}=\tuple{[0,1], [0,1]}$ through the DNN considered in this example.}
    \label{fig:IBP_example_proof}
\end{figure}

To confirm this result, we employed the verification tool proposed in \cite{eProve} to analyze whether, for any $\mathbf{x} \in \mathcal{X}$, the DNN avoids selecting the unsafe output $y_3$, which conflicts with the abstraction $\mathcal{C}$. The tool identified a counterexample: $\mathbf{x} = \tuple{0.2808, 0.0508}$, which produces the output $\mathbf{y} = \tuple{0.4287, -0.4475, 0.4368}$. Evaluating $g(\mathbf{y})$ results in $\{c_3\}$ (unsafe), thereby confirming that \textit{Abstract Coherence}, and safety in general, are not guaranteed for any $\mathbf{x} \in \mathcal{X}$.

This simple example highlights a key limitation of \textit{Coherence}: the $\Im$ function employed in the formulation considers only a perturbation around a concrete fixed point $\mathbf{x}$. This restricted focus can mask potential unsafe behaviors that might emerge when the entire abstract domain $\mathcal{X}$ is propagated through the network. In contrast, our formulation starts from the entire abstract domain $\mathcal{X}$ and allows considering a more general abstract input by leveraging the widening input perturbation function $\wIm$ and abstract reasoning about the DNN's semantics. This demonstrates how \textit{Abstract Coherence} offers guarantees over the entire abstract input domain rather than specific regions, thereby addressing the limitations of \textit{Coherence}.

\section{On the Hardness of Abstract DNN-Verification Problem}\label{apx:hardness}
Although lifting the output of $g^\#$ can yield more informative answers with potentially fewer verification queries, the \textsc{Abstract DNN-Verification} problem remains difficult to solve. Even with a large abstraction of tolerable answers, the number of iterative refinements required to reach a solution can still grow exponentially with the size of the instance. As a result, we have the following.

\setcounter{proposition}{2}
\begin{proposition}
    The \textsc{Abstract DNN-Verification} problem is NP-Complete.
\end{proposition}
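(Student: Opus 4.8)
The plan is to prove NP-completeness by the standard two-part argument: membership in NP, and NP-hardness by reduction from the original \textsc{DNN-Verification} problem, which is already known to be NP-complete \cite{Reluplex}. For membership, I would observe that a certificate for a \textit{non}-abstract-safe instance (i.e., a ``no'' instance of \textsc{ADV}, where $\mathcal{C}\comp(g^\sharp\comp f^\sharp)\comp\wIm(\mathcal{X})=\res$) consists of a concrete input point $\mathbf{x}$ witnessing that the reachable set forces the abstraction up to $\res$; alternatively, since the definition quantifies over the abstract domain, a certificate is a point in $\wIm(\mathcal{X})$ together with the linear-region/activation pattern of $f$ at that point, from which $g^\sharp\comp f^\sharp$ restricted to that region — and hence the $\mathcal{C}$-image — can be checked in polynomial time. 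Care is needed here: one must argue that $\mathcal{C}$, presented as part of the input, can be evaluated in polynomial time, and that checking whether the resulting class lands strictly inside $\res$ or hits $\res$ is a polynomial test. Actually the cleaner route to membership is to phrase \textsc{ADV} so that a ``no'' certificate reduces to exhibiting, as in the classical case, an input and activation pattern violating the (abstracted) postcondition; this mirrors the membership proof for \textsc{DNN-Verification}.

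For hardness, the key step is to show \textsc{DNN-Verification} reduces to \textsc{ADV}. By the discussion right after Definition~\ref{def:ADV}, \textsc{DNN-Verification} is literally an instantiation of \textsc{ADV}: given an instance $\tuple{f^\sharp,g^\sharp,\mathcal{X},\mathcal{S}}$ with $\mathcal{S}\subsetneq\res$, set $\wIm=\id$ and take the closure $\mathcal{C}$ generated by $\{\mathcal{S},\res\}$ (i.e., $\mathcal{C}=\{\mathcal{S},\res\}$, which is trivially a uco on $\wp(\res)$ once one checks idempotency, extensivity, monotonicity), so that $\mathcal{C}(g^\sharp\comp f^\sharp(\mathcal{X}))=\mathcal{S}\subsetneq\res$ exactly when $g^\sharp\comp f^\sharp(\mathcal{X})\subseteq\mathcal{S}$, and equals $\res$ otherwise. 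This map is computable in logarithmic space/polynomial time, and it is an exact reduction: the \textsc{ADV} instance answers \textit{Abstract Safe} iff the \textsc{DNN-Verification} instance answers \safe. Since \textsc{DNN-Verification} is NP-hard \cite{Reluplex}, so is \textsc{ADV}.

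Combining the two parts yields NP-completeness. The main obstacle I anticipate is the membership direction rather than hardness: one has to be precise about the representation of $\mathcal{C}$ (e.g., as an explicit lattice of subsets of the finite index set $[1,n]$, so that its size is at most exponential in $n$ but the relevant evaluation $\mathcal{C}(\cdot)$ on a given set is polynomial in the representation size), and one must make sure that the abstract objects $f^\sharp$, $g^\sharp$, and $\wIm$ are finitely and polynomially presented so that the whole verification question is genuinely in NP and not merely in some larger class. I would handle this by fixing, as the paper implicitly does, interval/hyperrectangle abstractions and a polynomially-presented $\mathcal{C}$, reducing the certificate to the same combinatorial object (a ReLU activation pattern plus a witness point) used in the proof that \textsc{DNN-Verification}$\in$NP \cite{Reluplex}, and then checking the $\mathcal{C}$-image condition in polynomial time on top of that.
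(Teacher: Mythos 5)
Your proposal has the same two-part skeleton as the paper's appendix argument: membership in NP, plus hardness inherited from \textsc{DNN-Verification} via the Reluplex reduction from \textsc{3-SAT}. On hardness you are in fact more explicit than the paper, which only asserts that hardness ``directly follows'' from \cite{Reluplex}; you supply the missing link, namely that the instantiation $\wIm=\id$, $\mathcal{C}=\{\mathcal{S},\res\}$ noted after Definition~\ref{def:ADV} is a polynomial-time exact reduction from \textsc{DNN-Verification} to \textsc{ADV}, which is exactly what is needed to transfer NP-hardness. The genuine divergence is in the membership argument. The paper's certificate is the propagated abstract output set itself: it argues that $f^\sharp(\wIm(\mathcal{X}))$ is computable by interval or zonotope propagation in polynomial time, that $g^\sharp$ (Algorithm~\ref{alg1}) and the containment test against $\mathcal{C}$ are polynomial, and concludes membership from that. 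You instead reuse the Katz et al.\ certificate --- a witness point together with its ReLU activation pattern --- and check the $\mathcal{C}$-image on the induced linear region. Your route is arguably the sounder one: the paper's argument, read literally, describes a deterministic polynomial-time decision procedure (which would place the problem in P), and only makes sense once one recalls that a single abstract propagation over-approximates and that exactness requires the refinement that a certificate is meant to short-circuit; your caveats about polynomially presenting $\mathcal{C}$, $f^\sharp$, $g^\sharp$ and $\wIm$ address precisely the points the paper leaves implicit. Two small refinements you should add: since $\mathcal{C}(\cdot)=\res$ may require several concrete classes to be simultaneously reachable before the closure is forced to the top, a ``no'' certificate may need up to $n$ witness points (one per class), which is still polynomial; and the witness-point scheme is complete only under the reading in which $f^\sharp$ denotes exact reachability rather than a fixed over-approximating transformer --- the reading under which NP-completeness is the correct classification in the first place.
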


The inclusion in NP is straightforward to prove. A certificate for this problem is a specific set of abstracted reachable outputs resulting from the propagation of the perturbed inputs $\wIm(\mathcal{X})$ through $f^\sharp$ and $g^\sharp$. This set of outputs can be checked to see if $\mathcal{C}(g^\#(\mathcal{R})) \subsetneq \res$. Given the approximated neural network function $f^\sharp$, we can compute the abstract reachable set of outputs $f^\sharp(\wIm(\mathcal{X}))$ using interval analysis or zonotope propagation, which are polynomial in the size of the network. Once the abstract output set from $f^\sharp(\wIm(\mathcal{X}))$ is obtained, we apply the procedure of Alg.~\ref{alg1} to compute $g^\sharp$, polynomial in the number of reachable sets. Finally, we check if the abstract reachable set of outputs $g^\sharp \circ f^\sharp(\wIm(\mathcal{X}))$ is contained in the abstract safe set $\mathcal{C}$. This inclusion check is usually a simple geometric containment problem (e.g., verifying if one interval or polytope is contained in another), which can indeed be done in polynomial time. Hence, the problem is in NP.

The hardness of the problem directly follows from the result of \cite{Reluplex}, showing that there exists a reduction from \textsc{3-SAT} to the \textsc{DNN-Verification} problem.

\section{Empirical Evaluation: Further Details}\label{appendix:full_empirical}

\subsection*{Habitat-Lab Experiment}
Habitat \cite{habitat2,habitat3} is a high-performance simulator designed for training and evaluating deep reinforcement learning (DRL) models, particularly in 3D environments for embodied AI tasks. In the \textit{Social Navigation} task employed in our evaluation, we have a robotic agent that has to follow a humanoid without any collision. This evaluation considers the publicly released trained agent from the Habitat-Lab GitHub repository.\footnote{\url{https://github.com/facebookresearch/habitat-lab}} The agent's input space consists of an arm depth camera, a binary human detector, and depth stereo cameras, while its output space controls the agent's linear and angular velocities.

\paragraph{Hierarchy Abstraction.} This experiment shows how the \textsc{ADV} problem can be applied to continuous output spaces by selecting a subset of discrete velocity classes. Specifically, the original output space for the robotic agent includes linear and angular velocities in $\mathbb{R}$. To create a hierarchical abstraction of the output space, we first discretize it at the desired maximum resolution, then divide the actions into a set of discrete velocity classes, denoted as $\textbf{V}$. Each class $V_i$ in $\textbf{V}$ is defined as $V_i = \{(v, w) \mid v_{i_{\text{min}}} \leq v \leq v_{i_{\text{max}}}, w_{i_{\text{min}}} \leq w \leq w_{i_{\text{max}}}\}$, where $v$ and $\omega$ represent the linear and angular velocities, respectively, bounded by predefined lower and upper thresholds. Using these discrete classes, we can construct the first level of hierarchical abstraction, $\mathcal{C}_1$, as illustrated in Fig. \ref{fig:hierarchy_habitat}, in the main paper. We can further aggregate elements from $\mathcal{C}_1$, resulting in an abstract classification within $\mathcal{C}_2$.

Fig. \ref{fig:hierarchy_habitat}, in the main paper, reports the complete output abstraction employed for \textit{Situation 1}. For the sake of simplicity, we only report the abstraction for the first unsafe situation, but a similar strategy is applied to all the other situations considered using different thresholds as described in Tab.~\ref{tab:concrete_classes_habitat}.

\begin{table}[h!]
    \small
    \centering
    \caption{Definition of the safe and abstract safe output bounds for the Habitat experiments. Here, $-\infty, +\infty$ indicates any possible negative and positive robot velocities, respectively.}
    \label{tab:concrete_classes_habitat}
   \begin{tabular}{c c c} 
     \Xhline{3\arrayrulewidth}
     \textbf{Situation} & \textbf{Safe output} ($v$, $\omega$) & \textbf{Abstract safe output} ($v$, $\omega$) \\ [0.5ex] 
     \Xhline{3\arrayrulewidth}
      1 & ($-\infty$, $-0.4$], [$0.0$, $0.1$] & ($-\infty$, $-0.4$], ($-\infty$, $+\infty$) \\
      2 & [$0.4$, $+\infty$), [$0.0$, $0.1$] & [$0.0$, $+\infty$), ($-\infty$, $+\infty$) \\ 
      3 & [$0.0$, $+\infty$), [$0.1$, $+\infty$) & [$-0.4$, $+\infty$), ($-\infty$, $+\infty$) \\
    \Xhline{3\arrayrulewidth}
    \end{tabular}
\end{table}

\end{document}